\newtheorem{remark}{Remark}
\newtheorem{assumption}{Assumption}
\def\qed{ \ \vrule width.2cm height.2cm depth0cm\smallskip}
\newcommand{\hP}{\hat\dbP}
\newcommand{\sig}{\mathsf{Sig}}
\newcommand{\oo}{\overline}
\newcommand{\uu}{\underline}
\newcommand{\ba}{\begin{array}}
\newcommand{\ea}{\end{array}}
\newcommand{\be}{\begin{equation}}
\newcommand{\ee}{\end{equation}}
\newcommand{\bea}{\begin{eqnarray}}
\newcommand{\eea}{\end{eqnarray}}
\newcommand{\beaa}{\begin{eqnarray*}}
\newcommand{\eeaa}{\end{eqnarray*}}
\def\dbR{\mathbb{R}}
\def\b{\beta}
\def\l{\lambda}
\def\th{\theta}
\def\o{\omega}
\def\D{\Delta}
\def\cF{{\cal F}}
\def\cK{{\cal K}}
\def\cL{{\cal L}}
\def\cQ{{\cal Q}}
\def\hE{\mathbb{E}}
\def\hG{\mathbb{G}}
\def\hN{\mathbb{N}}
\def\hP{\mathbb{P}}
\def\hQ{\mathbb{Q}}
\def\hR{\mathbb{R}}
\def\q{\quad}
\def\pa{\partial}
\def\qed{ \hfill \vrule width.25cm height.25cm depth0cm\smallskip}
\newcommand{\basa}{\begin{assumption}}
\newcommand{\easa}{\end{assumption}}
\newcommand{\bas}{\begin{assum}}
\newcommand{\eas}{\end{assum}}
\def\pa{\partial}
\def\1{{\bf 1}}
\def\:{\!:\!}
\newcommand{\ts}{\mathsf{T}}
\newcommand{\LS}{\mathsf{LS}}
\begin{document}

\newtheorem{thm}{Theorem}[section]
\newtheorem{lem}[thm]{Lemma}
\newtheorem{cor}[thm]{Corollary}
\newtheorem{prop}[thm]{Proposition}
\newtheorem{rem}[thm]{Remark}
\newtheorem{eg}[thm]{Example}
\newtheorem{defn}[thm]{Definition}
\newtheorem{assum}[thm]{Assumption}

\renewcommand {\theequation}{\arabic{section}.\arabic{equation}}
\def\thesection{\arabic{section}}
\title{\bf  Deep Signature FBSDE Algorithm}

\author{
Qi Feng\thanks{Department of Mathematics, University of Michigan, Ann Arbor, MI 48109-1043. Email: \url{qif@umich.edu} }
\and
Man Luo\thanks{Department of Mathematics, University of Southern California, Los Angeles, CA 90089-2532. Email: \url{manl@usc.edu}}
\and
Zhaoyu Zhang\thanks{Department of Mathematics, University of Southern  California, Los Angeles, CA 90089-2532. Email: \url{zzhang51@usc.edu}}
}

\date{}
%\date{\today}
\maketitle

\begin{abstract}
  We propose a deep signature/log-signature FBSDE algorithm to solve forward-backward stochastic differential equations (FBSDEs) with state and path dependent features. By incorporating the deep signature/log-signature transformation into the recurrent neural network (RNN) model, our algorithm shortens the training time, improves the accuracy, and extends the time horizon comparing to methods in the existing literature.  Moreover, our algorithms can be applied to a wide range of applications such as state and path dependent option pricing involving high-frequency data, model ambiguity, and stochastic games, which are linked to parabolic partial differential equations (PDEs), and path-dependent PDEs (PPDEs). Lastly, we also derive the convergence analysis of the deep signature/log-signature FBSDE algorithm.
\end{abstract}

\section{Introduction}
\paragraph{Motivation.} Recent developments of numerical algorithm for solving high dimensional PDEs draw a great amount of attention in various scientific fields. In the seminal paper \cite{weinan2017deep}, deep learning technique was first introduced to study the numerical algorithms for high dimensional parabolic PDEs. The deep learning BSDE method is based on the non-linear Feynman-Kac formula, which provides the equivalent relations between parabolic PDEs and Markovian backward stochastic differential equations (BSDEs) (see e.g. \cite{pardoux1992backward}). When the system does not have Markovian property, e.g. path-dependent property involved, the BSDE is equivalent to a path-dependent PDE (PPDE), which was first introduced in \cite{dupire2019functional} for path-dependent option pricing problem. 
The deep learning BSDE method has been recently extended to design numerical algorithms for PPDEs. The path-dependent property introduces extra complexity in the numerical scheme, and it returns a high dimensional problem even if the original space variable is low dimensional. In this study,  we shall focus on the numerical solutions for the corresponding Markovian and non-Markovian FBSDEs.

For the deep learning BSDE method \cite{weinan2017deep}, it shows the efficiency of machine learning in solving high dimensional parabolic PDEs but subject to small Lipschitz constants or equivalently small time duration. The exponential stopping time strategy has been introduced in \cite{ruan2020} to extend the time duration. However, both algorithms are still using the deep neural network combined with standard Euler scheme in essence, which makes it sensitive to the time discretization. Namely, the time dimension is still large for long time duration, which may take a long time to train the deep neural network (DNN) model. Furthermore, the deep learning BSDE method is not robust to missing data. If we miss a proportion of our data (e.g. data points in the Euler scheme), the accuracy will be affected. In particular, this is the same type of difficulty when dealing with high frequency data. In this case, one has to down-sample the stream data to a coarser time grid to feed it into the DNN-type algorithm. It may miss the microscopic characteristic of the streamed data and render lower accuracy. On the other hand, the high frequency and path-dependent features show up naturally in option pricing problems and non-linear expectations within various financial contexts, e.g. limit order book \cite{bielecki2013dynamic, carr2001pricing, cochrane2000beyond, hansen1991implications, Man2021, madan2010illiquid}, nonlinear pricing \cite{pardoux1992backward, yu2020backward}, Asian option pricing \cite{musiela_rutkowski_2005}, model ambiguity\cite{dirk2017good, biagini2017robust, cohen2017european, garlappi2007porfolio}, stochastic games and mean field games \cite{fouque_zhang, pham_zhang}, etc.

\paragraph{Our work.}
Motivated by these problems, we introduce the deep signature transformation into the recurrent neural network (RNN) model to solve BSDEs. The ``signature" is defined as an iterated integral of a continuous path with bounded $p$-variation, for $p>1$, which is a recurring theme in the rough path theory introduced by T. Lyons \cite{lyons1994differential}.  The ``signature" has recently been used to define kernels \cite{ CO18, KO16, min2020convolutional} for sequentially ordered data in the corresponding reproducing kernel Hilbert space (RKHS). This idea is further developed in \cite{kidger2019deep} to design ``deep signature" by combing the kernel method and DNN. Furthermore, the ``deep signature" has been used in RNN to study controlled differential equations in \cite{liao2019learning}. The signature approach also provides a non-parametric way for extraction of characteristic features from the data, see e.g. \cite{levin2013learning}. The data are converted into a multi-dimensional path through various embedding algorithms and then processed for computation of individual terms of the signature, which captures certain information contained in the data. The advantage of this signature method is that this method can deal with high frequency data, and is not sensitive to the time discretization. Motivated by this idea, we propose to combine the signature/log-signature transformation and RNN model to solve the FBSDEs,  which should have a much coarser time partition, a better downsampling effect, and more robust to the high-frequency data assumptions.
\paragraph{Related works.}
The numerical algorithm for solving PPDE with path dependent terminal condition (first type PPDE) has been recently studied in \cite{sabate2020solving, saporito2020pdgm} by using 
recurrent neural network. The second type PPDE arises from the Volterra SDE setting, where the non-Markovian property is introduced by the forward process instead of the terminal condition. The numerical algorithms for the option pricing problem in the Volterra SDEs setting has been recently studied in \cite{jacquier2019deep, ruan2020} by using deep learning, \cite{bayer2016pricing} by using regularity structures, and \cite{FengZhang2020} by using cubature formula. 

However, none of these works consider the high frequency data features in the algorithm. Neither do they consider the longer time duration in the model. Furthermore, we also provide the convergence analysis of our algorithm after introducing the signature/log-signature transformation layer into the RNN model. 

\section{Algorithms}
\subsection{Signature and signature transformation}
In this section, we introduce the preliminary facts about the signature from the rough path theory \cite{lyons1994differential} and the signature transformation \cite{liao2019learning} we used in the algorithm. In general, for a bounded variation path $x_t\in\hR^d$, for $t\in[0,T]$,  the  signature of $x$ (up to order $N$) is defined as the iterated integrals of $x$. More precisely, for a word $J=(j_1,\cdots,j_k)\in\{1,\cdots,d\}^k$ with size $|J|=k$,
\bea \label{signature intro}\sig_N(x)_t&=&\sum_{k=0}^N\int_{0<t_1<\cdots<t_k<t}dx_{t_1}\otimes\cdots\otimes dx_{t_k},\quad t\in[0,T],\\
&=&\Big(1,\sum_{j=1}^d \int_0^tdx^j_{t_1},\cdots,\sum_{|J|=N}\int_{0<t_1<\cdots<t_N<t}dx^{j_1}_{t_1}\cdots dx^{j_N}_{t_N} \Big)   \nonumber
\eea
where we use the convention that $\sig_0(x)_t\equiv 1$. The signature $\sig_N(x)_t$ lives in a strict subspace $\mathbb{G}_N(\hR^d) \subset T_N(\hR^d)$, known as the {free Carnot group} over $\hR^d$ of step $N$, where $T_N(\hR^d)=\oplus_{k=0}^N(\hR^d)^{\otimes k}$ is the truncated tensor algebra over $\hR^d.$ Furthermore, the exponential map defines the diffeomorphism from the Lie algebra $\mathfrak{g}_N(\mathbb{R}^d)$ to the Lie group $\hG_N(\hR^d)$, namely
\bea\label{sig and log sig}
\hG_N(\hR^d) =\exp ( \mathfrak{g}_N(\mathbb{R}^d)),
\eea 
where $\mathfrak{g}_N(\mathbb{R}^d)$ is the Lie sub-algebra of $T_N(\mathbb{R}^{d})$ generated by the canonical basis $e_i, i=1,\dots,d,$   of $\mathbb{R}^d$, and the Lie bracket  is given by $[a,b]=a\otimes b-b\otimes a$.  
Thus, the $\log$ signature lives in the linear space  $\mathfrak{g}_N(\mathbb{R}^d)$, and we denote logarithm of the signature of the path $x$ as $\mathsf{LS}(x).$ Let $\pi_m(\cdot)$ be the projection map of the signature and the $\log$ signature at order $m$. We denote $\LS_m(X)=\pi_m(\LS(x))$ as the truncated log signature of a path $x$ of order $m$. We introduce the following standard treatment when computing the signature of a path together with the time parameter.
\begin{defn}\label{defn: time aug path}
Given a path $x: [a, b] \rightarrow \hR^d$, we define the corresponding time-augmented path by $\hat{x}_t = (t, x_t)$, which is a path in $\hR^{d + 1}$. 
\end{defn}
We should remark here that a bounded $p$-variation path is essentially determined by its truncated signature at order $\lfloor p \rfloor$ (e.g. \cite{friz2010multidimensional}[Chapter 7]). This means that essentially no information is lost when applying the signature transform of a path at certain order without using the whole signature process. 
\begin{prop}[Universal nonlinearity, \cite{arribas2018derivatives}, see also \cite{kidger2019deep} Proposition A.6]
Let $F$ be a real-valued continuous function on continuous piecewise smooth paths in $\hR^{d}$ and let $\cK$ be a compact set of such paths. Then for all $x \in \cK$ and $\forall \varepsilon$, there exists a linear functional $\cL$ such that,
\bea
|F(x) - \cL(\sig(x))| \leq \varepsilon.
\eea
\end{prop}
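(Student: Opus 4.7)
The plan is to deduce this proposition from the Stone--Weierstrass theorem applied to the family of linear functionals of the signature evaluated on the time-augmented path. Since $\cK$ is a compact set of continuous piecewise smooth paths (in a topology in which the signature map is continuous, e.g.\ the $1$-variation or bounded-variation topology), $C(\cK, \hR)$ is a commutative Banach algebra under the uniform norm, so it suffices to exhibit a subalgebra $\cA \subset C(\cK, \hR)$ that contains the constants and separates points.

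First I would introduce the candidate family
\bea
\cA := \{x \mapsto \la \ell, \mathsf{S}(\hat x)\ra : \ell \in T((\hR^{d+1}))^*\}, \notag
\eea
where $\hat x_t = (t, x_t)$ is the time-augmented path. Continuity of each element of $\cA$ on $\cK$ is immediate from continuity of the signature map in the chosen variation topology, combined with continuity of the linear pairing against a fixed $\ell$. Unitality is automatic: the degree-zero coordinate of any signature equals $1$, so every constant belongs to $\cA$.

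The next key step is closure under pointwise multiplication, which is exactly Ree's shuffle identity,
\bea
\la a, \mathsf{S}(\hat x)\ra \cdot \la b, \mathsf{S}(\hat x)\ra = \la a \sqcup\!\!\sqcup b, \mathsf{S}(\hat x)\ra, \notag
\eea
where $\sqcup\!\!\sqcup$ denotes the shuffle product on the tensor algebra. Since $a \sqcup\!\!\sqcup b$ is again an element of $T((\hR^{d+1}))$, the product lies in $\cA$, so $\cA$ is a unital subalgebra of $C(\cK, \hR)$.

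The main obstacle, and the place where the time-augmentation hypothesis is genuinely used, is point-separation. Given $x \ne y$ in $\cK$ I must produce $\ell$ with $\la \ell, \mathsf{S}(\hat x)\ra \ne \la \ell, \mathsf{S}(\hat y)\ra$, which reduces to $\mathsf{S}(\hat x) \ne \mathsf{S}(\hat y)$. By the Hambly--Lyons uniqueness theorem, the signature of a continuous bounded-variation path is injective up to tree-like equivalence; and because the first coordinate of $\hat x$ is the strictly increasing map $t \mapsto t$, no two distinct time-augmented paths can be tree-like equivalent. Thus $\mathsf{S}$ is injective on $\{\hat x : x \in \cK\}$, and the coordinate projection onto any tensor index at which the two signatures disagree furnishes the desired separating functional. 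With the three Stone--Weierstrass hypotheses in hand, density of $\cA$ in $C(\cK, \hR)$ yields, for each $F \in C(\cK,\hR)$ and each $\varepsilon > 0$, a linear functional $\cL$ with $|F(x) - \cL(\mathsf{S}(\hat x))| \le \varepsilon$ uniformly in $x \in \cK$, which is the claim.
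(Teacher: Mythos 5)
The paper states this proposition without proof, citing \cite{arribas2018derivatives} and \cite{kidger2019deep} for it, so there is no internal argument to compare against; your proof is correct and is essentially the standard one given in those references. Specifically, you use the same three ingredients as \cite{kidger2019deep}, Proposition A.6: the shuffle identity makes linear functionals of the signature a unital subalgebra of $C(\cK,\hR)$, time-augmentation excludes tree-like equivalence so that the signature separates points of $\cK$, and Stone--Weierstrass then yields uniform density.
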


We introduce the signature and the $\log$ signature layer in \cite{liao2019learning}.
\begin{defn}[Signature and $\log$ Signature Sequence Layer]
Consider a discrete $d$-dimensional time series $(x_{t_{i}})_{i = 1}^{n}$ over time interval $[0, T]$. A $(\log)$ signature layer of degree $m$ is a mapping from $\hR^{d \times n}$ to $\hR^{\hat{d} \times N}$, which computes $(\sig_{k})_{k = 0}^{N-1}$ $($or $(\LS_{k})_{k = 0}^{N-1})$ as an output for any $x$, where $\sig_k$ $($or $\LS_{k})$ is the truncated $(\log)$ signature of $x$ over time interval $[u_{k}, u_{k+1}]$ of degree $m$ as follows:
\bea
\sig_{k} = \pi_m(\sig_{[u_{k}, u_{k + 1}]}),~(\text{or}~\LS_{k} =\pi_m( \LS_{[u_{k}, u_{k + 1}]} )),
\eea
where $k \in \{0, 1, ..., N -1\}$ and $\hat{d}$ is the dimension of the truncated $(\log)$ signature.
\end{defn}

\subsection{Main algorithms}
In this section, We consider the following Markovian FBSDE,
\bea\label{markov FBSDE}(\mathsf M)
\begin{cases}
X_{t}& = x + \int_{0}^{t}b(s,X_{s})ds + \int_{0}^{t}\sigma(s,X_{s})dW_{s}, \\
Y_{t} &= g(X_{T}) + \int_{t}^{T}f(s, X_{s}, Y_{s}, Z_{s})ds - \int_{t}^{T}Z_{s}dW_{s},
\end{cases}
\eea
and the non-Markovian FBSDE, 
\bea\label{non-Markov FBSDEs}(\mathsf{NM}) 
\begin{cases}
X_{t}& = x + \int_{0}^{t}b(s,X_{s})ds + \int_{0}^{t}\sigma(s,X_{s})dW_{s}, \\
Y_{t} &= g(X_{\cdot\wedge T}) + \int_{t}^{T}f(s, X_{\cdot\wedge s}, Y_{s}, Z_{s})ds - \int_{t}^{T}Z_{s}dW_{s},
\end{cases}
\eea
for $t \in [0,T]$. In both the Markovian ($\mathsf{M}$) and non-Markovian ($\mathsf{NM}$) FBSDEs system above, we denote $\{W_{s}\}_{0\leq s \leq T}$ as $\mathbb R^d$-valued Brownian motion. Throughout the paper, unless otherwise stated, the process $X, Y$, and $Z$ take values in $\hR^{d_1}, \hR^{d_2},$ and $\hR^{d_2 \times d}$, respectively. We denote $g(X_{T})$ as the state dependent terminal condition and denote $g(X_{\cdot\wedge T})$ as the terminal condition depending on the path of $X$, which corresponds to the the payoff function in the option pricing problem. The pair $(Y_{t}, Z_{t})_{0 < t < T}$ solves the BSDE in $(\mathsf M)$ and $(\mathsf{NM})$ respectively.

We present signature/ log-signature FBSDE numerical schemes in detail. We first partition the time horizon $[0, T]$ into $n$ time steps with a mesh size $\Delta t: =T/n$, and the time partition is given by $0 = t_0 < t_{1} < \dots < t_n = T$. The state process $X$ is generated from Euler scheme as
\bea 
X_{t_{i + 1}}^{n}  = X_{t_{i}}^{n} + b(t_{i},X^{n}_{t_{i}}) \Delta t  + \sigma(t_{i},X^{n}_{t_{i}}) \Delta W_{t_{i+1}},
\eea 
where $\Delta W_{t_{i+1}}:= W_{t_{i+1}}- W_{t_{i}}$ denotes the increment of the Brownian motion. Next, for some $k \in \{a\in \mathbb Z^+: n/a \in \mathbb Z^+\}$, we partition the time interval $[0, T]$ into $\tilde{n} := n/k$ segmentations with step size $\Delta u := k \Delta t$. The segmentation can be written as $0 = u_0 < u_1 (= t_{k}) < \cdots < u_{\tilde{n}} = T$. Then we compute the signature/log-signature \footnote{The numerical implementation of the signature/log-signature transformation was borrowed from \cite{iisignature}.} of the forward process $X$ truncated at order $m$ based on the segmentation $(u_i)_{1\le i\le \tilde{n}}$, which is denoted as $(\pi_m(\sig(X^{n})_1), \dots, \pi_m(\sig(X^{n})_{\tilde{n}})$. Moreover, we approximate the process $Z$ using a recurrent neural network (RNN) with truncated signature / log-signature at order $m$ as the inputs. Namely, we denote
\bea\label{sig layer for FBSDE}
Z^{\theta,\sig}_{u_i}:=\mathcal R^{\theta}(\pi_m(\sig(X^{n})_0),\cdots,\pi_m(\sig(X^{n})_{i-1}) ),
\eea 
for $i \in \{1, \dots, \tilde{n}\}$, which is the output of the RNN \footnote{In particular, the recurrent network in this paper is the LSTM network \cite{LSTM}.} with truncated signature of forward process $X$ at order $m$ as the inputs. Similarly, we denote 
\bea \label{log sig layer for BSDE}
Z^{\theta,\LS}_{u_i}:=\mathcal R^{\theta}(\pi_m(\LS(X^{n})_0), \cdots,\pi_m(\LS(X^{n})_{i-1}))
\eea 
as the output of RNN with log signature inputs.
Then, the discrete scheme of $Y$ for the Markovian BSDE is given as below,  
\begin{equation}\label{eqn: Y_sig}
\begin{aligned}
    Y^{\tilde{n},\sig}_{u_i} :=  & Y^{\tilde{n},\sig}_{u_{i-1}} - f(u_{i-1},X^n_{u_{i-1}}, Y^{\tilde{n},\sig}_{u_{i-1}}, Z^{\theta,\sig}_{u_{i-1}}) \Delta u_i + Z^{\theta,\sig}_{u_{i-1}}\Delta W_{u_{i}},\quad \mbox{for}\quad (\mathsf{M}).
\end{aligned}
\end{equation}
Similarly, for the non-Markovian problem, we define 
\begin{equation}\label{eqn: Y_sig path}
\begin{aligned}
    \bar  Y^{\tilde{n},\sig}_{u_i} :=  & \bar Y^{\tilde{n},\sig}_{u_{i-1}} - f(u_{i-1},X^n_{[0,u_{i-1}]}, \bar Y^{\tilde{n},\sig}_{u_{i-1}}, Z^{\theta,\sig}_{u_{i-1}}) \Delta u_i + \bar Z^{\theta,\sig}_{u_{i-1}}\Delta W_{u_{i}},\quad  \mbox{for} \quad (\mathsf{NM}),\\
\end{aligned}
\end{equation}
where $\Delta W_{u_{i+1}}:= W_{u_{i+1}}- W_{u_{i}}$.
 Lastly, the objective is to minimize the loss function $l(\theta, Y_0, Z_0) := \hE[(Y^{\tilde{n}, \sig}_{T} - g(X^n_{T}))^{2}]$ $($or $l(\theta, Y_0, Z_0) := \hE[(Y^{\tilde{n}, \sig}_{T} - g(X^n_{\cdot\wedge T}))^{2}]$ for non-Markovian FBSDE$)$, and update parameters $\theta$ by stochastic gradient descent. The algorithms for log-signature follows similarly by changing the $\sig$ layer with $\LS$ layer in the algorithm. The full algorithm for the $\sig$-layer FBSDE (or $\LS$-layer FBSDE) is presented in
Algorithm \ref{alg:the_alg}.

\begin{algorithm}
\caption{Deep signature/log-signature FBSDE algorithm.}
\label{alg:the_alg}
\begin{algorithmic}[1]
\State Initialize $Y_0, Z_0$. Initialize mesh size $\Delta t$, mini-batch size $M$, total number of paths $\hat{N}$, signature order $m$, number of segments $\tilde{n}$, loss threshold $\varepsilon$.
\State Generate data. (1) Simulate $\hat{N}$ paths of Brownian motions $(W^j_{t_1},\cdots,W^j_{t_n})_{1\le j\le \hat{N}}$ and (2) generate $\hat{N}$ paths of state processes $(X^{j,n}_{t_1},\cdots,X^{j,n}_{t_n})_{1\le j\le \hat{N}}$. (3) Compute signatures of state processes $(\pi_m(\sig(X^{j,n})_0),\cdots,\pi_m(\sig(X^{j,n})_{\tilde{n}-1}))_{1\le j \le \hat{N}}$ \Big(\text{or} $(\pi_m(\LS(X^{j,n})_0),\cdots,\pi_m(\LS(X^{j,n})_{\tilde{n}-1}))_{1\le j \le \hat{N}}$\Big).
    \While{$loss(\theta, Y_{0}, Z_{0})>\varepsilon$}
    \State Randomly select a mini-batch of data, with batch size $M$.
        \State \textbf{for} $i\in \{1,\cdots, \tilde{n}\}$
        \State \quad \quad  $Z^{j,\theta,\sig}_{u_i}=\mathcal{R}^{\theta}(\pi_m(\sig(X^{j,n})_{0}),\cdots,\pi_m(\sig(X^{j,n})_{i-1}) )$.
        \State  $ \Big(\text{or}\quad Z^{j,\theta,\LS}_{u_i}=\mathcal{R}^{\theta}(\pi_m(\LS(X^{j,n})_0) ,\cdots,\pi_m(\LS(X^{j,n})_{i-1}) )\Big).$
        \State \quad \quad 
        Compute $Y^{j,\tilde{n},\sig}_{u_{i+1}}$ 
        % based on $Z^{\theta,\sig}_{u_j}$ and $X^{i,n}_{t_j}$ 
        from Euler scheme \eqref{eqn: Y_sig}.
        \State \textbf{end} 
        \State Compute $loss(\theta,Y_0,Z_0)=\frac{1}{M} \sum_{j=1}^{M} (Y_{T}^{j, \tilde{n},\sig} - g(X_{T}^{j,n}))^{2}$ for problem ($\mathsf{M}$). 
        
        \quad \quad $loss(\theta,Y_0,Z_0)=\frac{1}{M}\sum_{j=1}^M (Y_{T}^{j,\tilde{n},\sig} - g(X^{j,n}_{\cdot \wedge T}))^{2}$ for problem ($\mathsf{NM}$).
        \State \Big( \text{or} Compute $loss(\theta,Y_0,Z_0)=\frac{1}{M} \sum_{j=1}^{M} (Y_{T}^{j, \tilde{n},\LS} - g(X_{T}^{j,n}))^{2}$ for problem ($\mathsf{M}$). 
        
        \quad \quad $loss(\theta,Y_0,Z_0)=\frac{1}{M}\sum_{j=1}^M (Y_{T}^{j,\tilde{n},\LS} - g(X^{j,n}_{\cdot \wedge T}))^{2}$ for problem ($\mathsf{NM}$).\Big)
        \State Minimize loss, and update $\theta$ by stochastic gradient descent.
    \EndWhile 
\end{algorithmic}
\end{algorithm}
We keep the following standard assumptions on the coefficients for FBSDEs.
\begin{assumption}
\label{FBSDE Assumption}
Let the following assumptions be in force.
\begin{itemize}
\item $b, \sigma, f, g$ are deterministic taking values in $\hR^{d_1},~ \hR^{d_1 \times d},~ \hR^{d_2},~ \hR^{d_2}$, respectively; and $b(\cdot, 0), \sigma(\cdot, 0), f(\cdot, 0, 0, 0)$ and $g(0)$ are bounded.
\item $b, \sigma, f, g$ are $C^k$-smooth  with respect to all variables $(t,x, y, z)$ for any desired $k\in\mathbb N_+$ and all derivatives are bounded by constant $L$.
\end{itemize}
\end{assumption}
We are now ready to present the universality approximation property of deep signature/log-signature Markovian FBSDE.
\begin{lem}
\label{Zsig approx}
Let \textbf{Assumption \ref{FBSDE Assumption}} be in force. Assume that $kh<\delta$ for any small $\delta>0$, for any given $T>0$, for some constant $C>0$ depnding on $T$ and $L$ in \textbf{Assumption \ref{FBSDE Assumption}}, and for any $\varepsilon>0$, there exists recurrent neural network $\mathcal R^{\theta}$, such that
\bea
\sum_{i = 0}^{\tilde{n}-1}\hE\Big[\int^{u_{i + 1}}_{u_{i}}|Z_{t} - Z_{u_{i}}^{\theta, \sig} |^{2}dt\Big] \leq C[1 + |x|^{2}]\delta+\varepsilon. \notag
\eea
\end{lem}
Furthermore, we have the following estimate.
\begin{thm}\label{main thm}
Let \textbf{Assumption \ref{FBSDE Assumption}} be in force. Assume that $kh<\delta$ for any small $\delta>0$, for any given $T>0$, for some constant $C>0$ depnding on $T$ and $L$ in \textbf{Assumption \ref{FBSDE Assumption}}, and for any $\varepsilon>0$, there exists recurrent neural network $\mathcal R^{\theta}$, such that
\bea
\max_{0 \leq i \leq \tilde{n}}\hE[\sup_{u_{i} \leq t \leq u_{i+1}}|Y_{t} - Y^{\tilde  n,\sig}_{u_{i}}|^{2}] \leq C[1+|x|^2+\varepsilon]\delta. \notag
\eea
\end{thm}
The same estimates follow after replacing  $Z_{\cdot}^{\theta, \sig}$ with $Z_{\cdot}^{\theta, \LS}$ in Lemma \ref{Zsig approx} and Theorem \ref{main thm}. We proved Lemma \ref{Zsig approx} and Theorem \ref{main thm} in Section \ref{convergence Markov}.   Similar results hold true for non-Markovian FBSDE as well, we postpone the analysis in Section \ref{convergence analysis NM FBSDE} 

\section{Numerical results}
\label{numerics}
In this section, we implement our algorithm to a wide range of applications including European call option, lookback option under Black-Scholes model, European call option under Heston model, and a high dimensional example etc. \footnote{The code could be found in the following URL link: \url{ https://github.com/zhaoyu-zhang/Sig-logSig-FBSDE}. The  desktop we used in this study is equipped with an i7-8700 CPU and a RTX 2080Ti GPU. For all the examples in this paper, we generated in total of $\hat{N}=100,000$ paths for the forward processes. $1,000$ paths were used to test, and the rest were used to train the neural network.}. In summary, our $\sig$/$\LS$-FBSDE method has the following advantages over other numerical methods in the current literature:
\begin{enumerate}
    \item Our algorithm is capable to find a more accurate solution to the FBSDE. 
    \item Our algorithm is capable to approximate the true solution efficiently in terms of computation time.
    \item Our algorithm is capable to handle high frequency data in a long time duration. The results are accurate and computation times are efficient.
    \item Our algorithm is capable to handle high dimensional and non-linear scenarios. 
\end{enumerate}
Throughout this section, we denote $(\Omega, \cF, (\cF_t)_{t \geq 0}, \hP)$ as the filtered probability space and denote $\hQ$ as the risk neutral measure. 

\subsection{Best Ask Price for GBM European Call Option}

% \paragraph{Best Ask Price for GBM European Call Option.}
The limit order book spread has been extensively investigated through \emph{No Arbitrage Bound}/\emph{No Good Deal bound} in incomplete markets \cite{carr2001pricing, cherny2009new,cochrane2000beyond, hansen1991implications, madan2010illiquid, merton1973theory}. Traditionally, under a risk neutral measure $\hQ$, one may assume the underlying asset $X$ follows a geometric Brownian motion i.e., 
\bea
dX_t = X_t(r_t dt + \sigma_t dW_t), \q X_0 = x_0, \notag
\eea
where $W_t$ is a standard Brownian motion under $\hQ$. For implementation convenience, we usually use constant $r$ and $\sigma$ to represent $r_t$ and $\sigma_t$ in numerical examples. By no good deal theory, the best ask price for the European call option at level $\kappa$ ($\kappa$ can be thought as the bound for girsanov kernels) can be represented as
\bea
\label{repb}
\mathrm P^{ask, \kappa} =\sup_{\hP \in \cQ^{ngd, \kappa}}\hE^{\hP}[\b_0^T(X_{T} - K)^+],
\eea
 where the set $\cQ^{ngd, \kappa}$ is nonempty and called the no good deal pricing set at level $\kappa$, $\beta_t^T$ is the discount factor defined by fixed risk-free interest rate $r_t$ . More details can be referred to \cite{bielecki2013dynamic, cherny2009new, Man2021,madan2010illiquid}. In our setting, we define $\cQ^{ngd, \kappa}$ as follow
\bea\label{abr}
\cQ^{ngd, \kappa} := \Big\{\hQ^{\th}:  \frac{d\hQ^{\th}}{d\hQ} = M_T(\lambda^\th, W); \sup_{t\in[0,T]} ||\lambda_t^\th|| \leq \kappa \Big\},
\eea 
where $M_T(\lambda^{\theta}, W) := \exp\{\int_0^T \lambda^{\theta}_t dW_t -\frac{1}{2}\int_0^T ||\lambda^{\theta}_t||^2 dt\}$, the process $\lambda_t^{\theta}$ denote all possible girsanov kernels and their bound is $\kappa$.

\begin{remark} There are several notions to introduce the kernel function $\lambda_t^{\theta}$. The kernel ambiguity introduced by drift uncertainty with discount factor $\beta_0^T(r^\theta)$ has been recently studied in \cite{Man2021}. Here we work in a simplified version, where we consider the discount factor fixed, we define 
$
\lambda_t^{\theta}:=\sigma_t^{-1}(\alpha_t^{\theta}-r_t\textbf{1})$, 
where $\alpha_t^{\theta}\in\Lambda$,
and 
$\Lambda=\{x\in\mathbb R^d, (x-r_t\textbf{1})^{\ts}\sigma_t\sigma_t^{\ts}(x-r_t\textbf{1})\le \kappa^2, \forall t \in [0, T] \}$. The main motivation to consider a fixed interest rate $r_t$ instead of $r_t^{\theta}$ is that we could numerically compute the lower and upper bound by using the empirical calibration of $r_t$ from the market data.
\end{remark}
With the specification of the pricing measure set $\cQ^{ngd, \kappa}$ in \eqref{abr}, we can show that \eqref{repb} is closely linked to the following BSDE. The proof follows from the comparison theorem for BSDEs, and we refer details in \cite{peng2004filtration}.

\begin{thm}
\label{BSDErep}
Assume no good deal assumption, then one can obtain that the best ask/bid price \eqref{repb} at level $\kappa$ are unique solutions to the following BSDEs when $t = 0$
\bea
\label{GBM-BSDE}
Y_{t}^{\pm} = \xi_T + \int_{t}^{T}\mathrm G^{\pm}(X_{t}, Y_t, Z_{t})dt - \int_{t}^{T}Z_{t}dW_{t}, 
\eea 
where $\xi_T := (X_T - K)^+$ and $\mathrm G^{\pm}$ are optimised drivers over all possible kernels in \eqref{abr},
\bea
\mathrm G^{-}(x, y,  z) = \min_{||\l^\th|| \leq \kappa} f(x, y, z, \l^\th) \q \textit{and} \q \mathrm G^{+}(x, y,  z) =\max_{||\l^\th|| \leq \kappa} f(x, y, z, \l^\th). \notag
\eea
Here we define $f(x, y, z, \l^\theta) := -ry + z\l^\theta$. Furthermore, we can obtain the optimised drivers as follow
\bea
\mathrm G^{\pm}(x, y,  z) = \pm \kappa ||z||- ry. \notag
\eea
\end{thm}

%{\color{red}compare with NN not finite difference method}
%We first apply our algorithm to provide lower/upper bound for the limit order book spread. Let the underlying asset $X$ follows a geometric Brownian motion, 
%\bea
%dX_t = X_t(r_t dt + \sigma_t dW_t), \q X_0 = x_0, \notag
%\eea
%where $W_t$ is a standard Brownian motion under risk neutral measure $\hQ$. Based on the no good deal theory \cite{bielecki2013dynamic} [Theorem 3.9] , the best ask price for the European call option at level $\kappa$  can be represented as
%\bea
%\label{repb}
%\mathrm P^{ask, \kappa} =\sup_{\hP \in \cQ^{ngd, \kappa}}\hE^{\hP}[\b_0^T(X_{T} - K)^+],
%\eea
%where $\cQ^{ngd, \kappa}$ denotes no good deal pricing set at level $\kappa$, and $\beta_0^T$ is a discount factor. With the specification of the pricing measure $\cQ^{ngd, \kappa}$ \footnote{See details in the Appendix.}, under no good deal condition assumption (see \cite{bielecki2013dynamic} Definition 2.16 and Theorem 2.17), the best ask/bid price \eqref{repb} at level %$\kappa$ are unique solutions to the following BSDEs at $t = 0$,
%\bea
%\label{GBM-BSDE}
%Y_{t}^{\pm} = \xi_T + \int_{t}^{T}\mathrm G^{\pm}(X_{t}, Y_t, Z_{t})dt - \int_{t}^{T}Z_{t}dW_{t}, 
%\eea 
%where $\xi_T := (X_T - K)^+$ and
%$\mathrm G^{\pm}(x, y,  z) = \pm \kappa ||z||- ry$. 
In this example, we implement 1-dimensional best ask scenario for \eqref{GBM-BSDE}, and we compare results from our signature methods with simple neural network method. We choose the following parameters for the simulation $x_0 = 100, \sigma = 0.20,  r = 0.05, \kappa = 0.05, K = 80, T = 1, m = 3, \tilde{n} = 5$, and batch size 1000. 
\begin{center}
\captionof{table}{Best ask price for GBM European call option \label{Tab:vanila}}
\resizebox{\textwidth}{!}{
\begin{tabular}{ |c|c|c|c|c|c| } 
 \hline
Simple NN &  Sig-LSTM  &  Sig-LSTM   & Sig-LSTM & Sig-LSTM  \\ 
 $n=100$ &  $\tilde{n}=5$, $n=100$ & $\tilde{n}=5$, $n=500$ & $\tilde{n}=5$, $n=1000$ & $\tilde{n}=5$, $n=5000$ \\ \hline
 25.526 &  25.48 & 25.46 &  25.46 & 25.45\\ 
 \hline
\end{tabular}}
\end{center}

As we see from Table \ref{Tab:vanila}, our algorithm combining signature with LSTM neural network (labeled as Sig-LSTM) outperforms the simple neural network method in terms of efficiency, our algorithm runs 20 times faster than simple neural network approach with $n=100$. This is what we should expect, since for each iteration our algorithm runs 5 steps segmented by signature ($\tilde{n} = 5$) instead of 100 steps ($n=100$) in the simple neural network approach with Euler scheme. Also, as we can see in Table \ref{Tab:vanila}, the result converges to $25.45$ when $n$ increases. More accuracy and time efficiency results comparisons are illustrated in the lookback option example, which is a path dependent option.

\subsection{Lookback Option Example}
% {\color{blue}
% \paragraph{Lookback options.}
In this example, we consider the classical Black-Scholes model setting.  Under the risk neutral measure $\hQ$, the stock prices $(X_t)_{t \geq 0}$ follows a geometric Brownian Motion with constant interest rate $r$, and volatility $\sigma$,
\beaa
dX_t = r X_t dt + \sigma X_t dW_t, \quad X_0 = x_0.
\eeaa 
Lookback option is one of the path-dependent financial derivatives. A lookback call option with floating strike is given by the payoff function
\beaa g(X_{[0,T]})  = X_T - \inf_{0 \leq t \leq T} X_t. 
\eeaa 
It is clear that the option price $Y_t$ has the form
\beaa 
Y_t = e^{-r(T-t)} \hE^{\hQ} [g(X_{[0, T]}) | \cF_t].
\eeaa 
Fortunately, $Y_t$ has an explicit solution, (e.g. \cite{musiela_rutkowski_2005}),
\beaa 
Y_t = X_t \Phi(a_1) - m_t e^{-r(T-t)} \Phi(a_2) - X_t \frac{\sigma^2}{2r} \left( \Phi(-a_1) - e^{-r(T-t)} \left(\frac{m_t}{y_t}\right)^{2r/\sigma^2} \Phi(-a_3)\right),
\eeaa 
where $m_t := \inf_{0 \leq u \leq t} X_t$, and 
\beaa 
a_1 = \frac{\log(X_t/m_t) + (r + \sigma^2/2)(T-t)}{\sigma \sqrt{T-t}}, \quad a_2 = a_1 - \sigma \sqrt{T-t} \mbox{ and } a_3 = a_1 - \frac{2r}{\sigma} \sqrt{T-t}.
\eeaa 
In the meantime, the option price $Y_t$ can also be represented as  a solution to the following BSDE,
\begin{equation*}
    \begin{cases}
    	 dY_t = & rY_tdt + Z_t dW_t, \\
    Y_T = & X_T - \inf_{0 \leq t \leq T} X_t.    \end{cases}
\end{equation*}
Therefore, we are able to apply our numerical method, and compare solutions with the true solution, and solutions from other numerical schemes. 

In this example, we choose the following parameters in simulation, $x_0 = 1, \sigma = 1, \ r = 0.01, T = 1, m=3$. In Figure \ref{fig:lookback_T1}, we compare the convergence of lookback option prices from different methods, and different time discretization steps.  Vanilla-LSTM refers to the algorithm that the inputs to the neural networks are the stock prices. PDGM from \cite{saporito2020pdgm} is a numerical scheme based on recurrent neural network, and it is used to solve PPDEs. LogSig-LSTM and Sig-LSTM refer to the two numerical algorithms proposed in this study.  
% Figure \ref{fig:lookback_error} and Figure \ref{fig:lookback_time}  list all computation errors and running times over different methods and time steps respectively. \textcolor{red}{Log sig efficiency.}
Figure \ref{fig:lookback_error}  list all computation errors  over different methods and time steps respectively.

 \begin{figure}[htbp]
 \begin{center}
\includegraphics[scale=0.5]{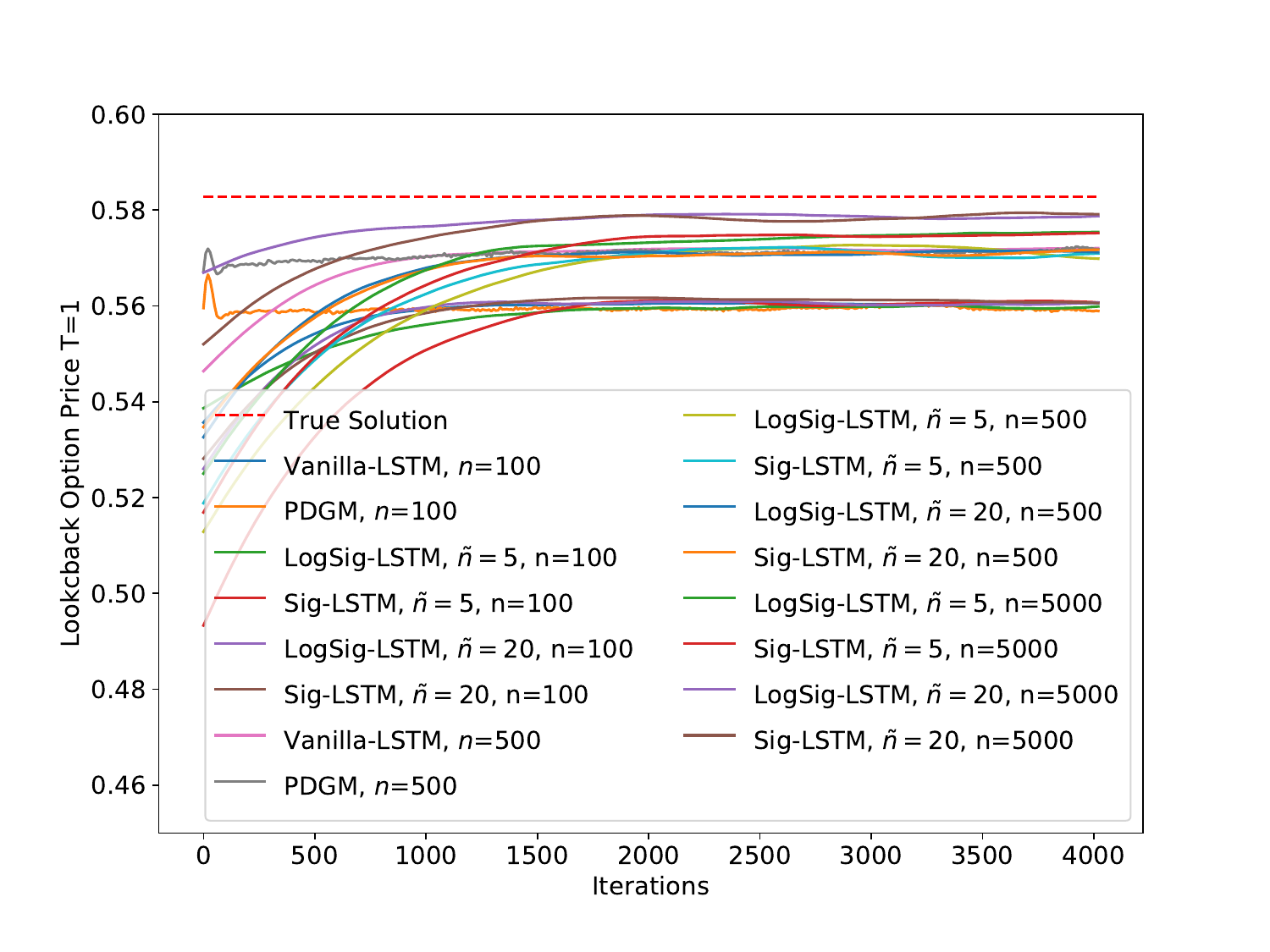}
  \caption{Convergence on lookback option prices ($T=1$) via different methods.}
  \label{fig:lookback_T1}
  \end{center}
 \end{figure}

\begin{figure}[htbp]
 \begin{center}
  \includegraphics[scale=0.5]{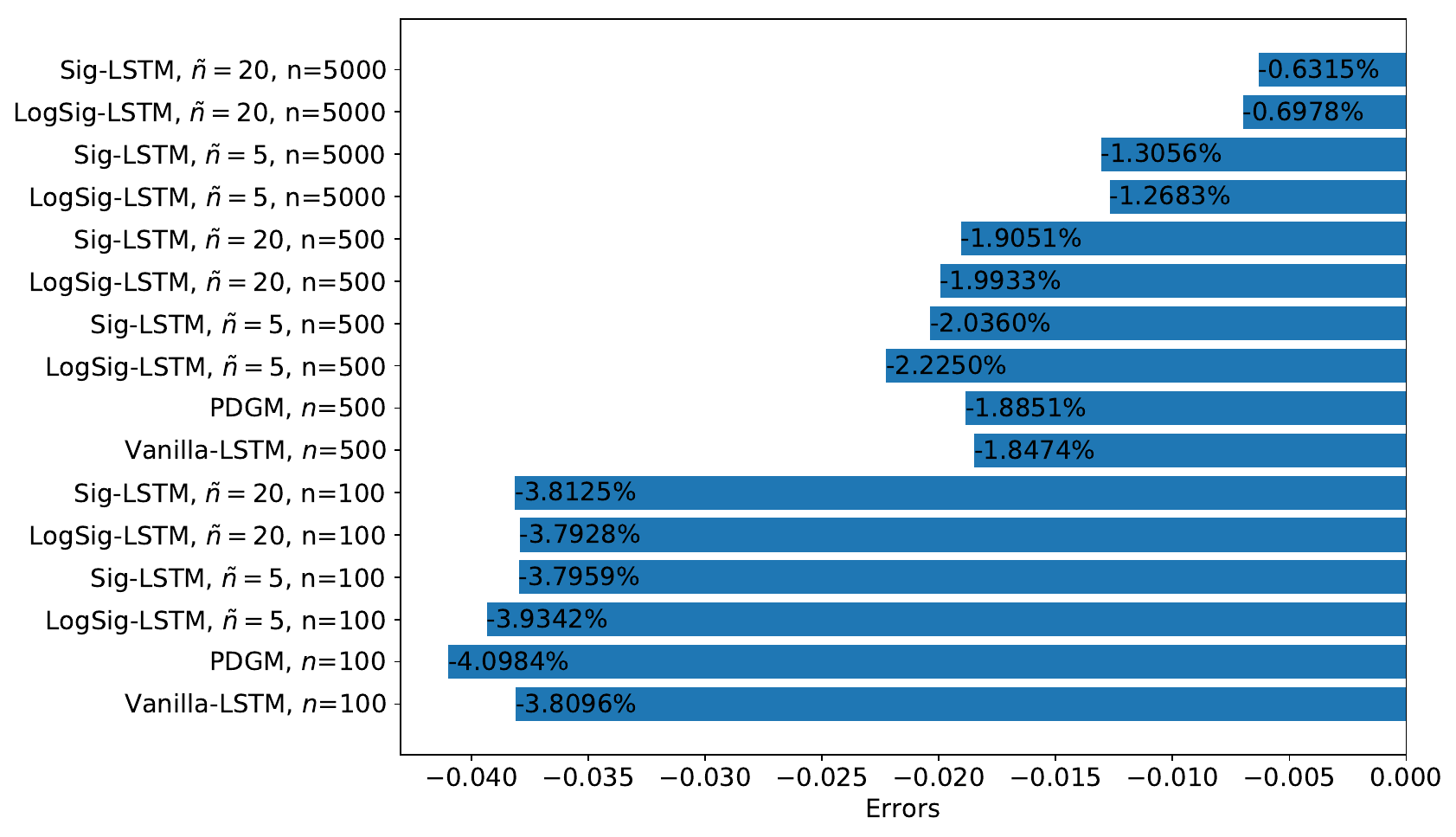}
  \caption{
  Option pricing errors across different methods and time steps.}
  \label{fig:lookback_error}
   \end{center}
 \end{figure}

The first observation is that under the same number of time steps, the numerical solutions from all methods are very similar. Secondly, the key to improve   the numerical solutions to be closer to the true solution is the number of the time steps during simulation, which is quite intuitive. As we can see in Figure \ref{fig:lookback_error}, the numerical error goes down with smaller the mesh sizes.  In particular, with $n=5000$, our log-signature and signature perform the best, and with $\tilde{n} = 20$, the numerical solution is only approximately 0.6\% apart from the true solution. The third observation is that the convergence rate is slower with smaller number of segmentations $\tilde{n}$ in log-signature and signature method. In addition, with a larger number of segmentations, the numerical results are generally better. Therefore, one may be encouraged to have $n$ become as large as possible. However, this is not feasible in practice due to the running times.

\begin{figure}[htbp]
 \begin{center}
  \includegraphics[scale=0.5]{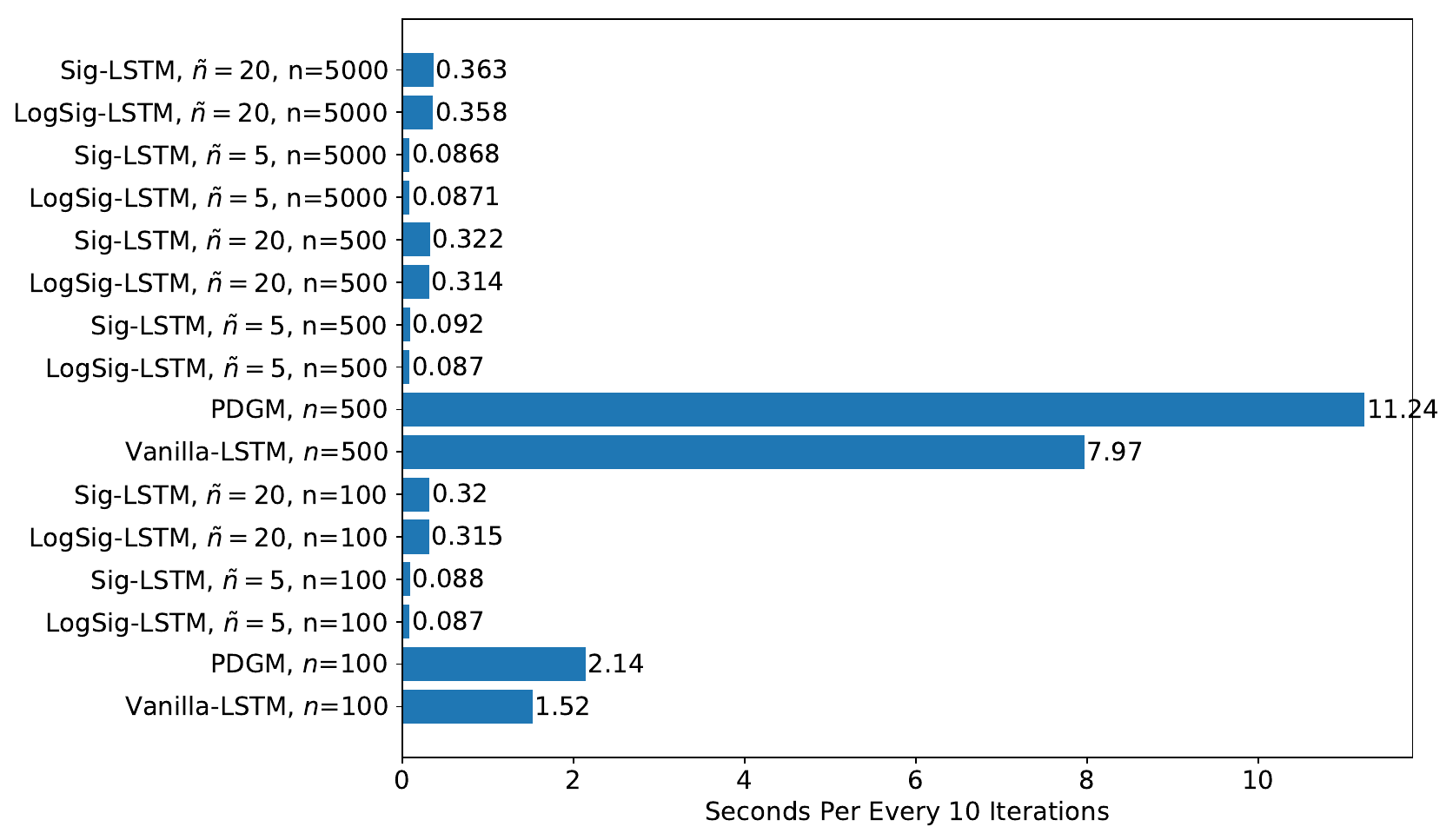}
   \end{center}
 \caption{Computation times over different methods and time steps.}
  \label{fig:lookback_time}
\end{figure}

Figure \ref{fig:lookback_time}  compares the running times over different methods and time steps respectively. The running times are approximately linear with the number of segmentations and time steps. Log-signature and signature methods run 100 times faster with 5 segmentation ($\tilde{n} =5$) than vanilla-LSTM with 500 times steps ($n = 500$). Therefore, summarizing the stock data paths into signature into a few segmentations, and then inputting them into the neural network would save us a great amount of time, and obtain the similar accuracy. 

 In addition, our method can handle high-frequency data. It would be impracticable  to input a stock paths with $n=5000$ into the vanilla-LSTM since it would take too long to train. However, we could first divide the 5000 time steps into 5 or 20 segmentation, and then compute the log-signature and signature of segmentations, which will be finally input into the neural networks. As we can see from this example, our method reaches a higher accuracy in an time efficient manner.

Furthermore, our method could handle high frequency data with a long time duration. In general, for a given time horizon $T$, we could choose $n$ (e.g. n=5000 or larger) large enough such that we can still simulate the asset dynamic with small step size, i.e. $\Delta t$ small, while the number of segments remain fixed. In our algorithm, the time discretization $n$ will only affect the data generation process which is offline. The computation efficiency of our algorihtm is only affected by the number of segments $\widetilde n$. Continuing with lookback option example, now we choose the parameters to be $x_0 = 1, r = 0.01, \sigma = 0.05, T = 10$. Since the numerical difference between log-signature and signature methods are minimal, we only make a comparison between vanilla-LSTM and Sig-LSTM in Figure \ref{fig:lookback_T10}. Figure \ref{fig:lookback_T10_close} plots a closeup of lookback option prices with different time-steps. Comparing to Vanilla-LSTM with $n = 500$, our Sig-LSTM methods with $\tilde{n} =5 $ and $n = 5000$ improves the accuracy by 1.36\%, and underestimates the solution only 0.624\%. In the meantime, our Sig-LSTM method with $\tilde{n} =5 $ and $n = 5000$ runs 100 times faster than Vanilla-LSTM with $n = 500$. 

\begin{figure}[htbp]
 \begin{center}
  \includegraphics[scale=0.4]{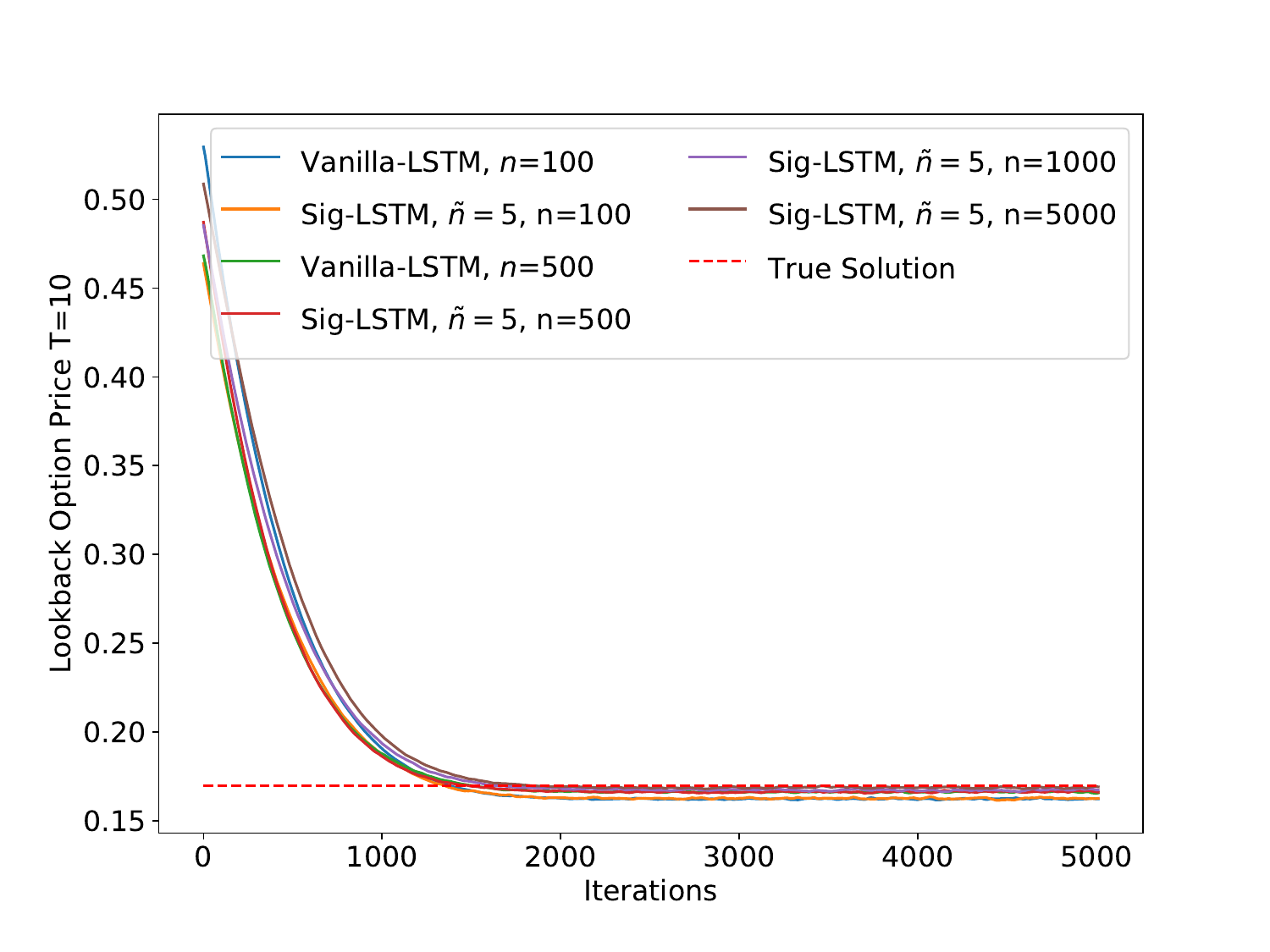}
  \caption{
  High frequency long duration lookback option pricing example ($T=10$).}
  \label{fig:lookback_T10}
  \end{center}
 \end{figure}
\begin{figure}[htbp]
  \begin{center}
  \includegraphics[scale=0.4]{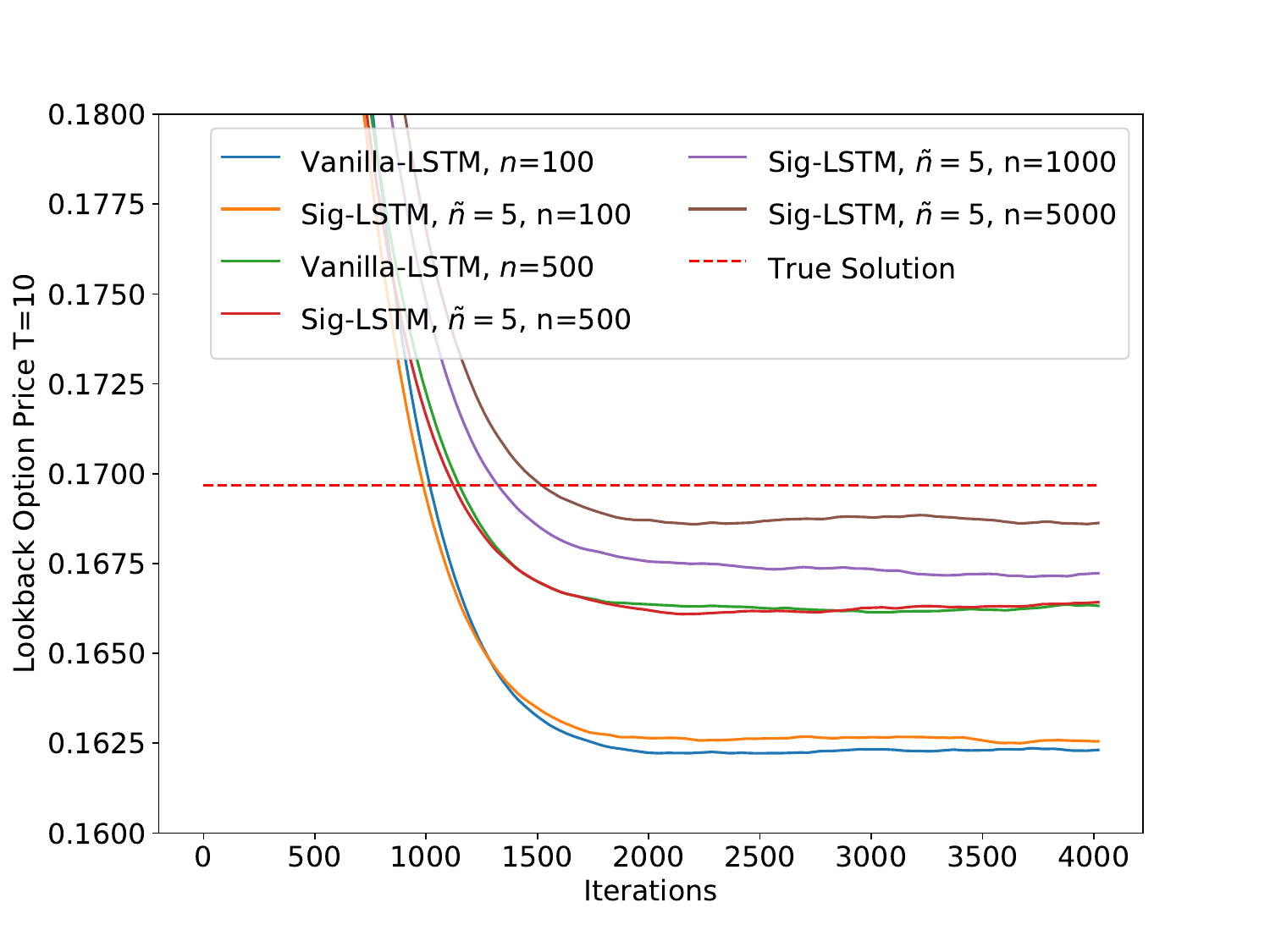}
   \caption{High frequency long duration lookback option pricing example ($T=10$ zoomed plot).}
  \label{fig:lookback_T10_close}
     \end{center}
 \end{figure}

% \begin{figure}[htbp]
% \begin{minipage}{0.48\textwidth}
% %%   \includegraphics[width = 0.45\textwidth, height = 6cm]{lookback_Y0_T10.pdf}
%   \includegraphics[scale=0.2]{lookback_Y0_T10_Closeup_MA.pdf}
%   % % \includegraphics[width = 0.45\textwidth, height = 6cm]{lookback_Y0_T10_Closeup_MA.pdf}
%   \caption{High frequency long duration lookback option pricing example ($T=10$).}
%   \label{fig:lookback_T10}
% \end{minipage}
% \begin{minipage}{0.48\textwidth}
%   \includegraphics[scale=0.21]{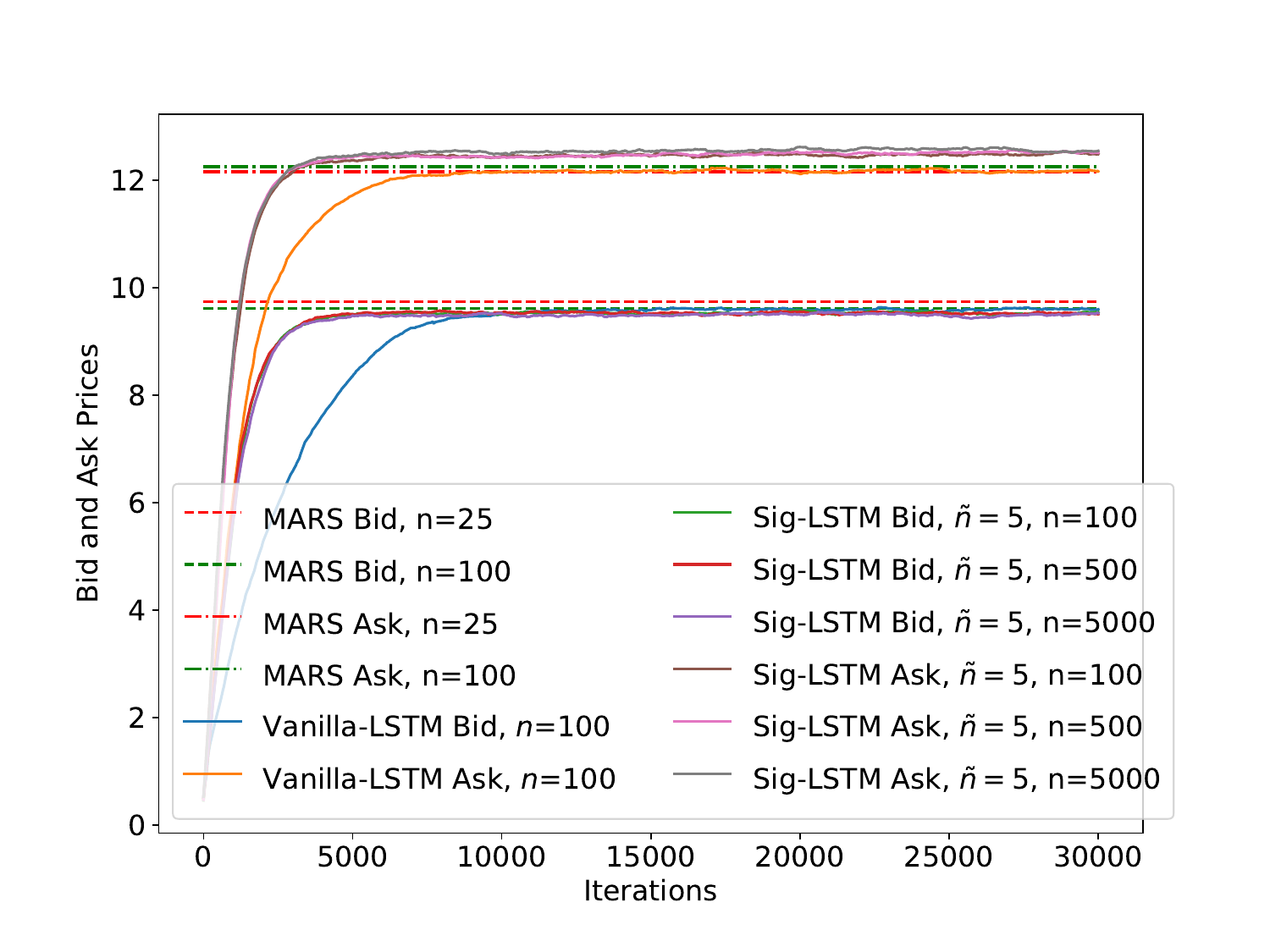}
%   \caption{A better bound for bid and ask prices.}
%   \label{fig:bid_ask}
% \end{minipage}
% \end{figure}

\subsection{European Call Option in the Heston Model under Parameter Uncertainty }

% \paragraph{European call options in the Heston model under parameter uncertainty.} %Recall that, under the risk neutral measure $\hQ$, 

We consider the Heston model in \cite{cohen2017european} for a European call option pricing problem with stochastic volatility model under parameter uncertainty. For $t \in [0, T]$, the asset price $S$ and forward variance process $V$ follows,
\begin{equation*}
\begin{cases}
     dS_t = rS_t dt + \sqrt{V_t}S_t (\rho dW_t^1 + \sqrt{1-\rho^2}dW_t^2),\\
dV_t = (\kappa\theta -[\kappa + \sigma \lambda]V_t)dt + \sigma \sqrt{V_t}dW_t^1,
\end{cases}
\end{equation*}
 and $W^1, W^2$ are two Brownian motions under the risk neutral measure $\hQ$ with correlation $\rho \in (-1, 1)$. Parameters ($\kappa, \theta, \sigma$) are assumed to be nonnegative and satisfy the Feller's condition $2\kappa\theta \geq \sigma^2$ to guarantee that variance process $V$ is bounded below from zero. Moreover, under the parameter uncertainty situation, an elliptical uncertainty set for parameters ($r, \kappa, \beta$, where $\beta \equiv \kappa\theta$) with $(1-\alpha)$ confidence is given by the quadratic form $U = \{u: u^T\Sigma^{-1}_{r,\kappa,\beta}u \leq \chi\}$, where $u$ is the perspective deviance towards the true parameters denoted as $u_t = (r_t -r, \kappa_t - \kappa, \beta_t - \beta)$, $\Sigma^{-1}_{r,\kappa,\beta}$ is the covariance matrix of the parameters and $\chi := \chi^2_3(1-\alpha)$ is the quantile of the chi-square distribution with three degrees of freedom. We should remark here that ellipsoidal specifications of uncertainty appear naturally in multivariate Gaussian settings for the uncertainty about the drifts of tradeable asset prices, and literature can be referred to \cite{dirk2017good, biagini2017robust,garlappi2007porfolio}. %Also,this form of uncertainty is the same as the confidence region under Wald's hypothesis test, and approximates the optimal confidence set, by Neyman-Pearson lemma see[]. 
In \cite{cohen2017european}, the pricing bound for the Heston call option under model ambiguity is derived and proved to be the unique solutions of the following BSDEs with payoff $g(S_T) = (S_T - K)^+$ at maturity $T$, 
\begin{equation}\label{eqn:BSDE_bid_ask_2}
    \begin{cases}
    	 dY_t^{\pm} = -H^{\pm}(S_t, V_t, Y_t^{\pm}, Z_t)dt + Z_t dW_t, \q Y_T^{\pm} = g(S_T), \\
    H^{\pm}(S_t, V_t, Y_t^{\pm}, Z_t) = \pm \sqrt{\chi \eta^T_t \Sigma^T \eta_t} -r Y_t,    
    \end{cases}
\end{equation}
where $\eta_t\in\hR^{3\times 1}$ is the vector of coefficients to the parameter deviances of equation given by
\bea
\eta_t = \left[(\frac{Z_t^2}{\sqrt{1 - \rho^2}\sqrt{V_t}} - Y_t), (\frac{-Z_t^1 \sqrt{V_t}}{\sigma} + \frac{\rho Z_t^2\sqrt{V_t}}{\sigma\sqrt{1 - \rho^2}}), (\frac{Z_t^1}{\sigma \sqrt{V_t}} - \frac{\rho Z_{t}^2}{\sigma \sqrt{1-\rho^2}\sqrt{V_t}}) \right]^T,\notag
\eea
and $Z_t, W_t\in \hR^2.$ Also, the perspective deviance towards the true parameter corresponding to \eqref{eqn:BSDE_bid_ask_2} are $u^{\pm}(S_t, V_t, Y_t^{\pm}, Z_t) = \pm \sqrt{\frac{\chi}{\eta^T_t \Sigma^T \eta_t}}\Sigma\eta_t$. Following the idea in \cite{cohen2017european}, the forward component $X = (S, V)$ of the SDE is generated by standard Euler-Maruyama scheme for the log-price and an implicit Milstein scheme for the variance
\begin{equation*}
    \begin{cases}
     \log S_{t}^\pi = \log S_{t_{i-1}}^{\pi} + (r - \frac{1}{2}V_{t_{i-1}}^{\pi})\Delta_{i} + \sqrt{V_{t_{i-1}}^{\pi}}(\rho \Delta W^1_{t_{i}} + \sqrt{1 - \rho^2}\Delta W_{t_{i-1}}^2), \\   
     V_{t_{i}}^\pi = \frac{V_{t_{i-1}}^\pi + \kappa \theta \Delta_i + \sigma \sqrt{V_{t_{i-1}}^\pi} \Delta W^2_{t_i} + \frac{1}{4}\sigma^2 ((\Delta W^2_{t_{i}})^2 - \Delta_i)}{1 + \tilde{\kappa}\Delta_i},
    \end{cases}
\end{equation*}
where $\Delta W^1_{t_i}, \Delta W^2_{t_i}$ are independent variables generated from the zero-mean normal distribution with variance $\Delta_i$. 

%In this section, we study the European call option pricing problem with stochastic volatility model under parameter uncertainty. The general model setup follows from \cite{cohen2017european}, where  the asset price $S$ and the variance process $V$ in the Heston model are given by
%\begin{equation*}
%\begin{cases}
     %dS_t = rS_t dt + \sqrt{V_t}S_t (\rho dW_t^1 + \sqrt{1-\rho^2}dW_t^2),\\
%dV_t = (\kappa\theta -[\kappa + \sigma \lambda]V_t)dt + \sigma \sqrt{V_t}dW_t^1,
%\end{cases}
%\end{equation*}
 %where we denote $W^1, W^2$ as two independent Brownian motions under risk neutral measure $\hQ$, with correlation $\rho \in (-1, 1)$. Under the parameter uncertainty situation, we do not know the precise values of $(r,\kappa,\beta)$, with $\beta=\kappa\theta$. However, following the ellipsoidal specification %\footnote{See details in the Appendix.} of uncertainty, the pricing bound for the Heston call option under model ambiguity  is the unique solution of the following BSDE with payoff $g(S_T) = (S_T - K)^+$ at maturity $T$,
%\begin{equation}\label{eqn:BSDE_bid_ask}
    %\begin{cases}
    	 %dY_t^{\pm} = -H^{\pm}(S_t, V_t, Y_t^{\pm}, Z_t)dt + Z_t dW_t, \q Y_T^{\pm} = g(S_T), \\
    %H^{\pm}(S_t, V_t, Y_t^{\pm}, Z_t) = \pm \sqrt{\chi %\eta^T_t \Sigma^T \eta_t} -r Y_t,    
    %\end{cases}
%\end{equation}
%where $\eta_t\in \hR^{3\times 1}$ can be computed explicitly.  

We implement the example in \cite{cohen2017european} with the same experiment set up: $S_0=100,~ V_0=0.0457,~r=0.05,~\kappa=5.070,~\theta=0.0457,~\sigma=0.4800,~\rho=-0.767,~K=100,~T=1$, and covariance matrix $\Sigma=\mathrm{Diag}(2.5e-05,0.25,1e-04)\in\hR^{3\times 3}$.
% \begin{figure}
% \begin{minipage}{0.48\textwidth}
% \begin{tabular}{ |c|c|c|c|c|c|c|c|c| } 
%  \hline
%  $S_0$ & $V_0$ & $r$  & $\kappa$ & $\theta$ &  $\sigma$ & $\rho$ & $K$ & $T$ \\ \hline
%  100 & 0.0457 & 0.05 & 5.070 &  0.0457 & 0.4800 & -0.767 & 100 & 1\\ 
%  \hline
% \end{tabular}
% \end{minipage}
% \end{figure}

% \begin{center}
% \begin{tabular}{ |c||c|c|c|  }
%  \hline
%  Covariance Matrix $\Sigma$& $r$ & $\kappa$ & $\beta$\\
%  \hline
%  $r$   & $2.5e-05$    &0&   0\\
%  $\kappa$ &   0  & 0.25   &0\\
%  $\beta$ &0 & 0&  $1e-04$\\
%  \hline
% \end{tabular}
% \end{center}
% With the same procedure in \cite{cohen2017european}, the forward component $X = (S, V)$ of the SDE is generated by standard Euler-Maruyama scheme for the log-price and an implicit Milstein scheme for the variance
% \begin{equation*}
%     \begin{cases}
%      log S_{t}^\pi = log S_{t_{i-1}}^{\pi} + (r - \frac{1}{2}V_{t_{i-1}}^{\pi})\Delta_{i} + \sqrt{V_{t_{i-1}}^{\pi}}(\rho \Delta W^1_{t_{i}} + \sqrt{1 - \rho^2}\Delta W_{t_{i-1}}^2), \\   
%      V_{t_{i}}^\pi = \frac{V_{t_{i-1}}^\pi + \kappa \theta \Delta_i + \sigma \sqrt{V_{t_{i-1}}^\pi} \Delta W^2_{t_i} + \frac{1}{4}\sigma^2 ((\Delta W^2_{t_{i}})^2 - \Delta_i)}{1 + \tilde{\kappa}\Delta_i},
%     \end{cases}
% \end{equation*}
% where $\Delta W^1_{t_i}, \Delta W^2_{t_i}$ are independent variables generated from the zero-mean normal distribution with variance $\Delta_i$. %\red{We?? }  simulate $N = 100,000$ paths over an equidistant time grid with $n = 25$. 

\begin{figure}[htbp]
 \begin{center}
  \includegraphics[scale=0.45]{}
   \end{center}
\caption{A better bound for bid and ask prices.}
  \label{fig:bid_ask}
 \end{figure}
 
 \begin{figure}[htbp]
\begin{center}
  \includegraphics[scale=0.45]{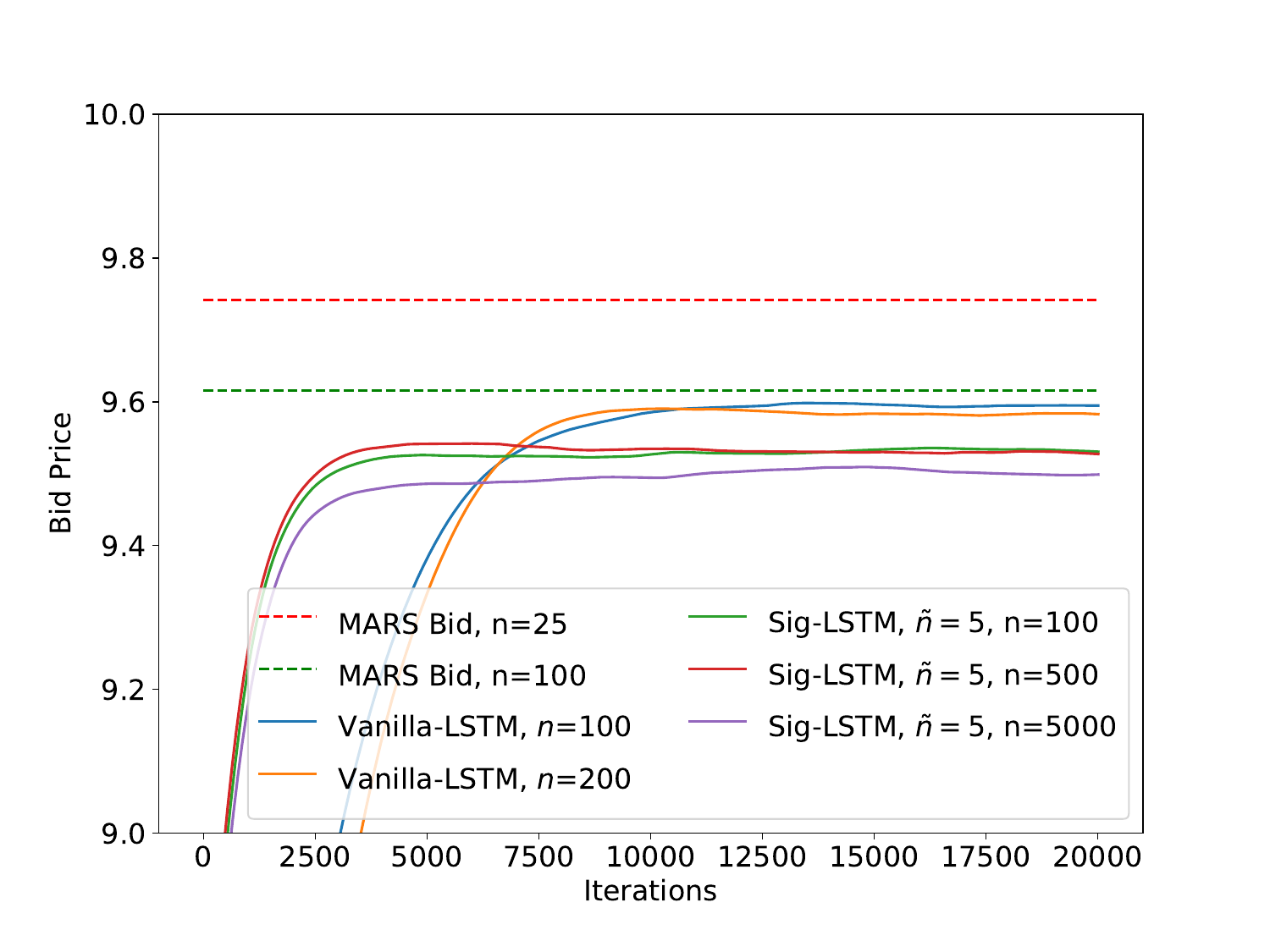}
    \caption{Zoomed plots for bid prices.}
      \label{fig:bid_closeup}
    \end{center}
\end{figure}

\begin{figure}[htbp]
\begin{center}
    \includegraphics[scale=0.45]{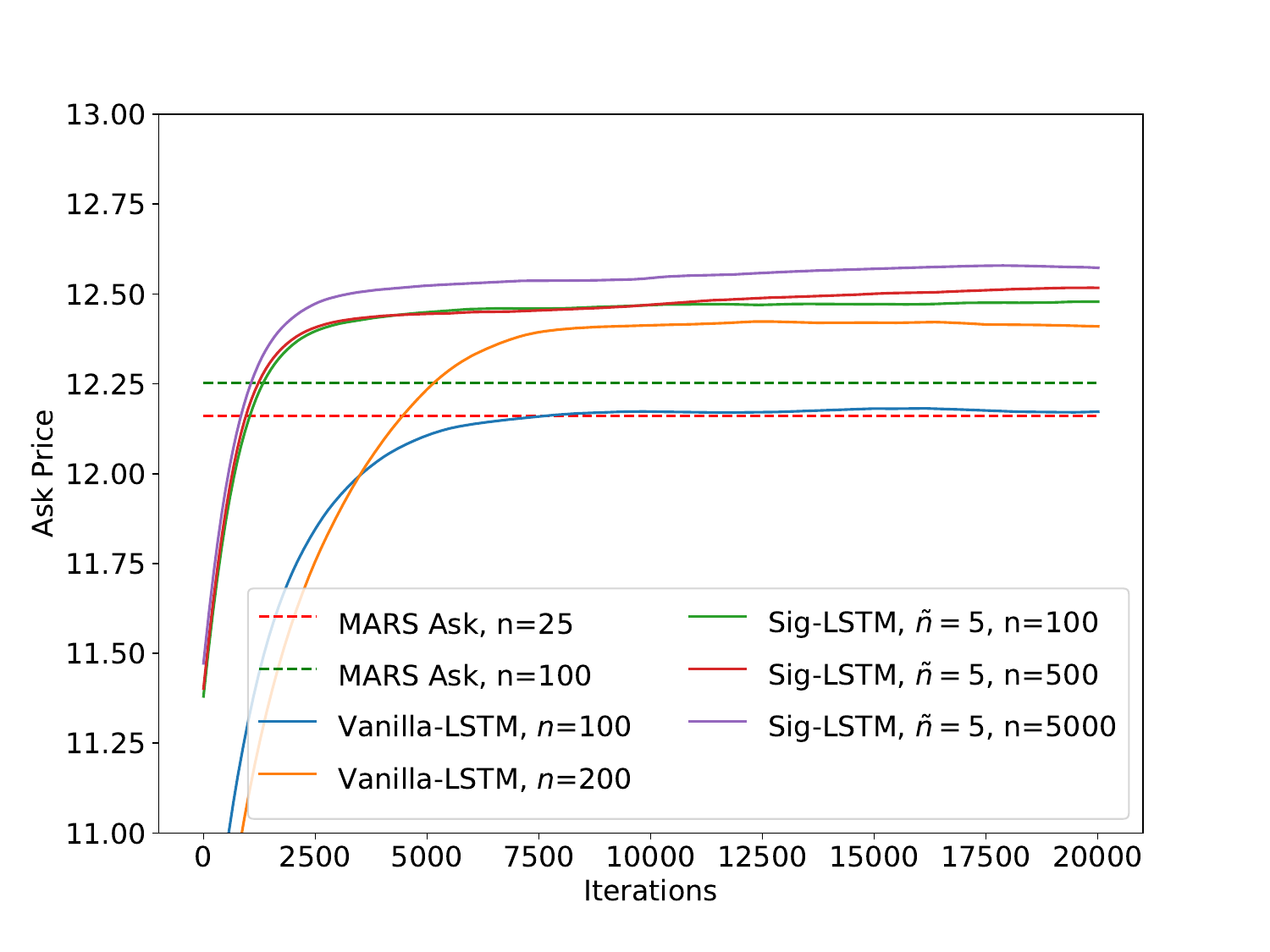} 
    \caption{Zoomed plots for ask prices.}
  \label{fig:ask_closeup}
    \end{center}
\end{figure}
 
From Figure \ref{fig:bid_ask}, we can see that our method provides a better pricing bound over the recursive MARS degree 2 with variance reduction method (denoted as ``MARS" in the figure) in \cite{cohen2017european}, by providing a slightly wider bound for the optimally controlled value process. The zoomed plots are in Figure \ref{fig:bid_closeup} and \ref{fig:ask_closeup}. As we increase the number of time steps to $n = 200$, the Vanilla-LSTM performs better than MARS method with $n=25$, and $n = 100$. Lastly, with $\tilde{n}=5, $ and $n = 5000$, our Sig-LSTM method efficiently improves the bound. This is what we should expect. With a larger number of time discretization $n$, the driver $H^{\pm}$ in \eqref{eqn:BSDE_bid_ask_2} are updated more accurately, which leads to the value process $Y^{\pm}$ in \eqref{eqn:BSDE_bid_ask_2} optimised to a higher degree.
%Lastly, we should mention that traditional numerical methods like ``Monte Carlo" and ``MARS" are not efficient to be extended to large time discretization scenarios.
%Compared with we see wider pricing bounds for the optimally controlled value process. This is what we should expect: with a higher number of n we also have a higher number of time-steps at which we optimise the driver and also lead to a higher number of time-steps at which we optimise the driver. And a more accurate pricing bounds.
%{\color{red}It is clear from Figure \ref{fig:bid_ask} that our method provides a better bound for best bid and ask prices over the recursive MARS degree 2 with variance reduction method (denoted as ``MARS" in the figure) in \cite{cohen2017european}, by finding more accurate solutions to the BSDEs \eqref{eqn:BSDE_bid_ask}. The closeup plots are in Figure \ref{fig:bid_ask_closeup}. As we increase the number of time steps to $n = 200$, the Vanilla-LSTM performs better than MARS method with $n=25$, and $n = 100$. Lastly, with $\tilde{n}=5, $ and $n = 5000$, our Sig-LSTM method efficiently improves the the bid price and ask price 1.22\% and 2.61\% respectively.}
\begin{center}
\captionof{table}{Bid ask prices for European call under Heston model \label{Tab:heston}}
\resizebox{\textwidth}{!}{
\begin{tabular}{ |c|c|c|c|c|c|c| } 
 \hline
 MARS Bid & MARS Bid & Vanilla-LSTM Bid  & Vanilla-LSTM Bid & Sig-LSTM Bid &  Sig-LSTM Bid  & Sig-LSTM Bid \\ 
 $n=25$ & $n=100$ & $n=100$ & $n=200$ & $\tilde{n}=5$, $n=100$ & $\tilde{n}=5$, $n=500$ & $\tilde{n}=5$, $n=5000$ \\ \hline
 9.74 & 9.62 &  9.59 & 9.58 &  9.53 & 9.527 & 9.50\\ 
 \hline\hline
  MARS Ask & MARS Ask & Vanilla-LSTM Ask  & Vanilla-LSTM Ask & Sig-LSTM Ask &  Sig-LSTM Ask  & Sig-LSTM Ask \\ 
 $n=25$ & $n=100$ & $n=100$ & $n=200$ & $\tilde{n}=5$, $n=100$ & $\tilde{n}=5$, $n=500$ & $\tilde{n}=5$, $n=5000$ \\ \hline
 12.16 & 12.25 &  12.17 & 12.41 &  12.48 & 12.52 & 12.57\\ 
 \hline
\end{tabular}}
\end{center}
% \begin{figure}[htbp]
% \begin{center}
%   \includegraphics[scale=0.24]{bid_ask.pdf}
%   \caption{A better bound for bid and ask prices.}
%   \label{fig:bid_ask}
%     \end{center}
% \end{figure}

% \begin{figure}[htbp]
% \begin{center}
%  \begin{minipage}[t]{0.48\textwidth}
%   \includegraphics[scale=0.25]{bid_ma.pdf}
%     % \includegraphics[width = 0.45\textwidth, height = 4cm]{bid_ma.pdf}
%       \end{minipage}
%          \begin{minipage}[t]{0.48\textwidth}
%     \includegraphics[scale=0.25]{ask_ma.pdf} 
%         % \includegraphics[width = 0.45\textwidth, height = 4cm]{ask_ma.pdf} 
%   \end{minipage}
%     \caption{Closeup plots for bid and ask prices.}
%   \label{fig:bid_ask_closeup}
%     \end{center}
% \end{figure}

\subsection{A High Dimensional Example}\label{section high and nonlinear}

% \paragraph{High dimensional and non-linear example.}

In this section, we consider the following path-dependent BSDE, 
\bea \label{high dim BSDE}
Y_t = g(X_{[0,T]})+ \int_t^T f(s, X_{\cdot} ,Y_s, Z_s) ds - \int_t^T Z_s dB_s,
\eea 
and the forward process is given by $dX_t=dB_t, \ X_0 = 0$. Based on the association with PPDE and the nonlinear Feynman-Kac formula, we construct a high dimensional example, which we could find the true solution. For simplicity, we choose  $$f = 0, \quad \text{and}\quad g(X_{\cdot}) = \left( \int_0^T \sum_{i=1}^d X_s^i ds \right)^2.$$
 We then compare the true solution with the solution approximated by our algorithm.  In this example, we use the deep log-signature BSDE algorithm because the input of the network grows exponentially in terms of the dimension. By applying the log-signature layer, we could potentially solve higher dimension problems. With $d = 20, T = 1, \tilde{n } = 5, n = 100$, after 10000 training iteration, the approximated solution of $Y_0$ from our algorithm is 6.60 with an error of 1\% to the true solution of 6.66. Again, our algorithm runs only 5 steps ($\tilde{n} = 5$) during training, which is quite time efficient. Here we remark that even our algorithm is able to approximate the true solution of the high dimensional example, it is more suitable for high frequency, path dependent and long duration data. This is because when generating signatures / log-signatures from high dimensional paths, the dimension of signatures / log-signatures would increase exponentially in terms of the dimension of the path, we could see this from equation \eqref{sig and log sig}. 
 
 We present the equivalent PPDE of our path-dependent BSDE example in \eqref{high dim BSDE}. The setup and definition of path deriatives can be found in Subsection \ref{convergence analysis NM FBSDE}. On the canonical space ($[0,T] \times C([0,T], \dbR^d)$), the PPDE follows
\begin{equation*}
    \begin{cases}
    & \partial_tu + \frac{1}{2} tr(\partial_{\o \o} u) + f(t, \o, u, \partial_\o u) = 0,\\
    & u(T, \o) = \xi(\o).
    \end{cases}
\end{equation*}
For this high dimensional example with generator  $f = 0$, and terminal condition $\xi(\o) = \left( \int_0^T \sum_{i=1}^d \o_s^i ds \right)^2$, the PPDE yields an explicit solution 
$$u(t, \o) = \left( \int_0^t \sum_{i=1}^d \o_s^{i} du\right)^2 + \left(\sum_{i=1}^d \o_t^i \right)^2 (T -t)^2 + 2(T-t) \left(\sum_{i=1}^d \o_t^{i} \right) \int_0^t \sum_{i=1}^d \o_s^{i} ds + \frac{d}{3}(T-t)^3.$$
%{\color{red}More derivation here?}
  
% \paragraph{High Dimensional Example} 

\subsection{Another Nonlinear Example}
% \paragraph{Nonlinear Example}
In this  example, we apply our algorithm to approximate the solutions of an non-linear FBSDE \eqref{high dim BSDE} with $d=1$, and the generator is
\bea\label{bdse-nonlinear}
	f(t,X_{[0,t]},Y_t,Z_t) &= &-\bigg( \min_{\mu \in [\uu{\mu}, \oo{\mu}]} \mu Z_t + \max(\sin(X_t + \int_0^t X_s ds), 0) (\oo{\mu} + X_t) \\
&&+\min(\sin(X_t + \int_0^t X_s ds), 0) (\uu{\mu} + X_t)
	+ \frac{1}{2} \cos(X_t + \int_0^t X_s ds)\bigg)\nonumber
\eea 
In the numerical implementation, we choose the terminal condition to be $Y_T = \cos(X_T + \int_0^T X_s ds)$, and the forward asset process $dX_t = dB_t$. The solution is explicitly given by $Y_t = \cos(X_t + \int_0^t X_s ds)$. This example is inspired by a two person zero sum game from \cite{pham_zhang}.
We choose the following parameters to implement our algorithm: $X_0 = 0$, $\uu{\mu} = 0.2$, $\oo{\mu} = 0.3$, $T = 1$. As illustrated in Figure \ref{fig:nonlinear} and Table \ref{Tab:nonlinear}, with an increase of number of segmentation $\tilde{n}$ and number of time steps $n$ in the Euler scheme will simultaneously improve the accuracy. With only 20 segmentations $(\tilde{n}=20)$ for  $n = 1000$, $Y_0$ reaches 0.9982 with an error of only 0.18\%, where true solution is 1.

\begin{figure}[htbp]
\begin{center}%{0.45\textwidth}
  \includegraphics[scale=0.45]{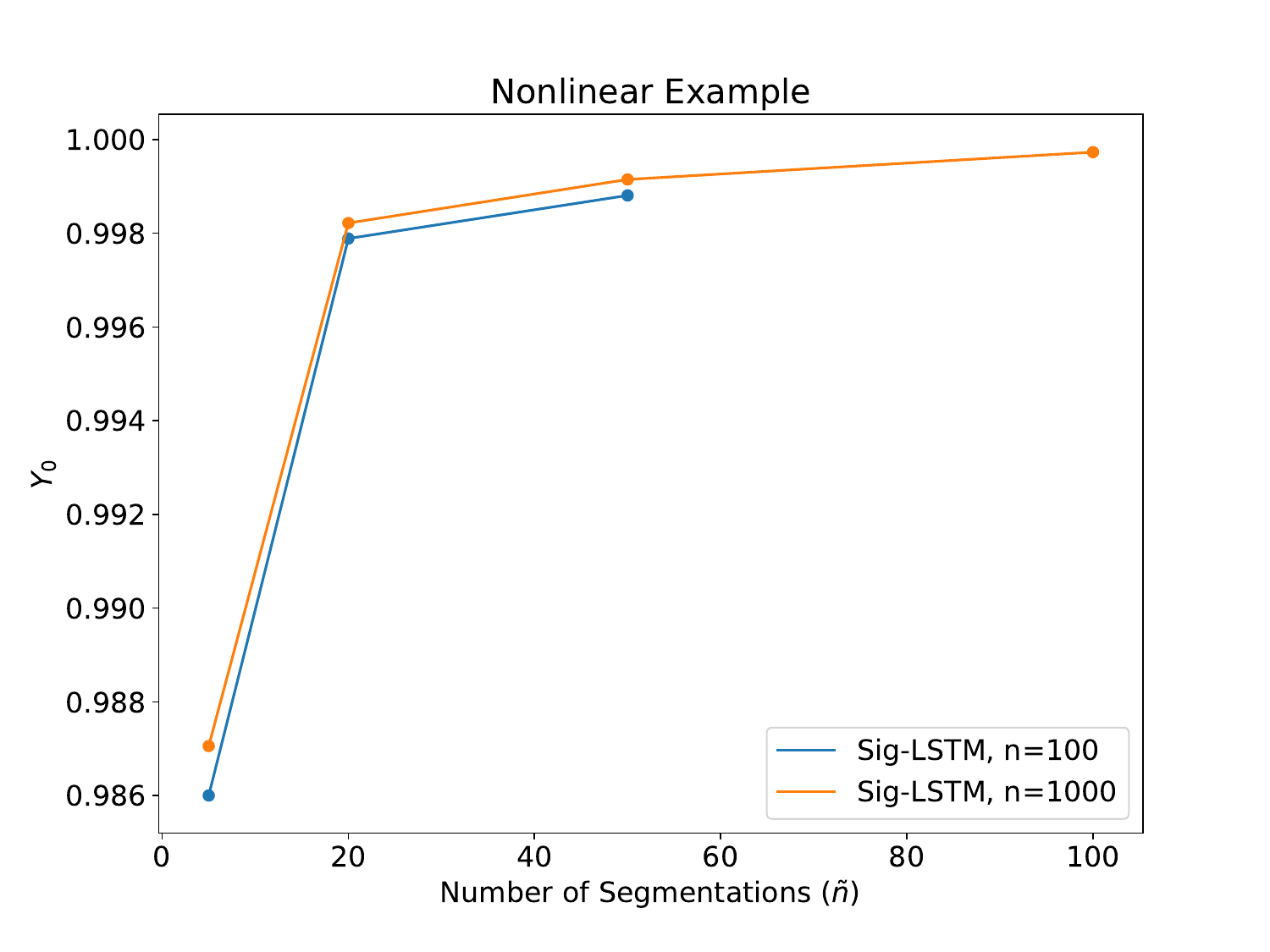}
  \caption{Nonlinear Example.}
  \label{fig:nonlinear}
    \end{center}
\end{figure}

\begin{center}%{0.45\textwidth}
\captionof{table}{$Y_0$ in the Nonlinear Example\label{Tab:nonlinear}}
\resizebox{0.6\textwidth}{!}{
\begin{tabular}{ |c|c|c|c|c|c| } 
 \hline
  & $\tilde{n}=5$ & $\tilde{n}=20$ & $\tilde{n}=50$ & $\tilde{n}=100$ \\ \hline
$n=100$ &  0.986 & 0.9979 &  0.9988 & --\\ \hline
$n=1000$ &  0.987 & 0.9982 &  0.9991 & 0.9997 \\ 
 \hline
\end{tabular}}
\end{center}

For this non-linear example, the equivalent PPDE follows 
\bea\begin{cases}
	& \partial_t u + \frac{1}{2} \partial_{\omega \omega} u + \min_{\mu \in [\uu{\mu}, \oo{\mu}]} \mu \partial_{\omega} u  + f_0(t, \omega, \bar{\omega}) = 0\\
	& u(T, \omega) = g(\omega_T, \bar{\omega}_T),
\end{cases}\eea 
where $g(\omega_T, \bar{\omega}_T) = \cos(\omega_T + \bar{\omega}_T)$, and $\bar{\omega}_t := \int_0^t \omega_s ds$, and $$f_0(t, \omega, \oo{\omega}) = \max(\sin(\omega_t + \oo{\omega}_t), 0) (\oo{\mu} + \omega_t) + \min(\sin(\omega_t + \oo{\omega}_t), 0) (\uu{\mu} + \omega_t) + \frac{1}{2} \cos(\omega_t + \oo{\omega}_t). $$
% As for the BSDE, the terminal condition is given by $Y_T = \cos(X_T + \int_0^T X_s ds)$, and the forward asset process $dX_t = dB_t$. The solution is explicitly given by $Y_t = \cos(X_t + \int_0^t X_s ds)$. 
% This example is inspired by a two person zero sum game from \cite{pham_zhang}.

\section{Convergence analysis}
In this section, we study the universality approximation property of the Markov FBSDE \eqref{markov FBSDE} and the non-Markovian FBSDE \eqref{non-Markov FBSDEs} by using the deep (log) signature and DNN in the standard Euler schemes. For notation simplicity, we may carry out the proofs only for one-dimensional case, i.e. $d = d_1 = d_2 = 1$.  Before we show the main estimates, we first introduce the following universality property for neural network from \cite{funahashi1993approximation}, see also \cite{liao2019learning}.
\begin{lem}
\label{NN approx}
Let $\hat\sigma(x)$ be a sigmoid function (i.e. a non-constant, increasing, and bounded continuous function on $\hR$). Let K be any compact subset of $\hR^{n}$, and $\hat f: K \rightarrow \hR^{d}$ be a continuous function mapping. Then for an arbitrary $\varepsilon > 0$, there exists an integer $N > 0$, an $d \times N$ matrix A and an N dimensional vector $\theta$ such that
\beaa
\max_{x \in K}|\hat f(x) - A\hat \sigma(Bx + \theta)| \leq \varepsilon,
\eeaa
holds where $\hat\sigma: \hR^{N} \rightarrow \hR^{N}$ is a sigmoid mapping defined by 
\beaa
\hat\sigma('(u_{1}, \cdots u_{N})) = '(\hat \sigma(u_{1}), \cdots, \hat \sigma(u_{N})).
\eeaa
where $'(u_{1}, \cdots u_{N})$ denotes the permutation of the sequence $(u_{1}, \cdots u_{N})$.
\end{lem}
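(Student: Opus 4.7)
The plan is to follow the Hahn-Banach/duality strategy in the style of Cybenko, exploiting only the two consequences of "non-constant, increasing, bounded, continuous": $\hat\sigma$ is uniformly bounded on $\hR$ and possesses two distinct finite limits $\hat\sigma(\pm\infty)$. First, I would reduce to scalar targets. Writing $\hat f=(\hat f^1,\dots,\hat f^d)$ with each $\hat f^i\in C(K;\hR)$, it suffices to produce for every component a scalar network approximating $\hat f^i$ within $\varepsilon/\sqrt d$; stacking the resulting per-component $B^i$ into a block matrix $B$, concatenating the biases $\theta^i$ into $\theta$, and placing the scalar output weights as the rows of a $d\times N$ matrix $A$ with $N=\sum_iN_i$ then yields exactly the stated matrix-vector form.

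For the scalar case, the target is to show that the linear span
\beaa
\cS:=\Big\{x\mapsto\sum_{j=1}^N c_j\hat\sigma(a_j^\top x+\beta_j)\ :\ N\in\hN,\ a_j\in\hR^n,\ c_j,\beta_j\in\hR\Big\}
\eeaa
is dense in $C(K)$ under the sup-norm. I would argue by contradiction: if $\overline{\cS}\neq C(K)$, then by Hahn-Banach there is a non-zero continuous linear functional on $C(K)$ vanishing on $\cS$, and by the Riesz representation theorem this functional is integration against a non-zero finite signed Borel measure $\mu$ on $K$. The defining property of $\cS$ then yields
\beaa
\int_K\hat\sigma(a^\top x+\beta)\,d\mu(x)=0\qquad\text{for every }(a,\beta)\in\hR^n\times\hR,
\eeaa
so the task reduces to showing that $\hat\sigma$ is \emph{discriminatory} in Cybenko's sense, i.e.\ the identity above forces $\mu=0$.

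For the discrimination step, I would fix $(a,\beta)$ and substitute $\hat\sigma(\lambda(a^\top x+\beta)+\varphi)$ into the identity. Monotonicity and boundedness of $\hat\sigma$ provide a uniform dominating constant, so dominated convergence in $\lambda\to+\infty$ produces the pointwise limit that is a two-level step function taking the value $\hat\sigma(+\infty)$ on $H^+_{a,\beta}:=\{a^\top x+\beta>0\}$, the value $\hat\sigma(-\infty)$ on $H^-_{a,\beta}:=\{a^\top x+\beta<0\}$, and $\hat\sigma(\varphi)$ on the hyperplane $\{a^\top x+\beta=0\}$. Passing to the limit in $\lambda$, varying the auxiliary parameter $\varphi$, and combining with the relation $\mu(H^+_{a,\beta})+\mu(H^-_{a,\beta})+\mu(\{a^\top x+\beta=0\})=\mu(K)$ forces $\mu(H^\pm_{a,\beta})=0$ for every $(a,\beta)$. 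Extending $\mu$ by zero to $\hR^n$, mollifying by a Gaussian kernel, and reading off the Fourier transform through the vanishing on all half-spaces yields $\hat\mu\equiv0$, hence $\mu=0$, contradicting its non-triviality. Density of $\cS$ in $C(K)$ then gives, for the prescribed $\varepsilon>0$, a representative $F(x)=\sum_{j=1}^N c_j\hat\sigma(a_j^\top x+\beta_j)$ with $\sup_K|\hat f-F|\le\varepsilon$, from which $A=(c_1,\dots,c_N)$, $B$ with rows $a_j^\top$, and $\theta=(\beta_1,\dots,\beta_N)^\top$ are read off directly.

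The main obstacle is the final implication \emph{"$\mu$ annihilates every open half-space $\Rightarrow\mu=0$"}: one must justify the Gaussian mollification and Fourier-transform argument for a general finite signed Borel measure on a compact set, and handle carefully the contribution of the boundary hyperplane in the $\lambda\to\infty$ limit. I would address this by splitting $\mu=\mu^+-\mu^-$ via Jordan decomposition, treating $\mu^\pm$ as finite positive measures, and using the classical fact that the family of characteristic functions of half-spaces is separating for the Fourier transforms of finite Borel measures on $\hR^n$; everything else in the proof is routine.
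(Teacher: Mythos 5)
Your proof is essentially correct, but it is worth noting that the paper does not prove this lemma at all: it is quoted verbatim as a known result, with the proof deferred to Funahashi's universal approximation theorem (the cited reference \cite{funahashi1993approximation}, see also \cite{liao2019learning}). Funahashi's argument is quite different from yours: it is semi-constructive, based on the Irie--Miyake Fourier integral representation of $L^1$ functions, applied to the integrable difference $\hat\sigma(x+\alpha)-\hat\sigma(x-\alpha)$ of shifted sigmoids, followed by a discretization of the integral into a finite sum of hidden units. Your route is instead Cybenko's duality argument: Hahn--Banach plus Riesz representation reduce density of the span to showing $\hat\sigma$ is discriminatory, and the discrimination step is handled by the $\lambda\to\infty$ scaling limit together with a Cram\'er--Wold-type separation statement for half-spaces. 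What Funahashi's approach buys is an explicit recipe for the hidden layer (useful in a numerical paper); what your approach buys is brevity, minimal hypotheses on $\hat\sigma$ (boundedness, monotonicity and non-constancy are used only to produce two distinct limits $\hat\sigma(\pm\infty)$ and a dominating constant), and a cleaner reduction of the vector-valued case by stacking per-component networks into a block matrix $B$ and block rows of $A$, exactly matching the stated matrix--vector form.

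One step of your discrimination argument needs a small repair. From $\mu(P_{a,\beta})=0$ and $\hat\sigma(+\infty)\mu(H^+_{a,\beta})+\hat\sigma(-\infty)\mu(H^-_{a,\beta})=0$ alone you cannot conclude $\mu(H^\pm_{a,\beta})=0$: this is one linear equation in two unknowns, and the symmetric equation obtained from $\lambda\to-\infty$ has a degenerate determinant precisely when $\hat\sigma(+\infty)=-\hat\sigma(-\infty)$ (e.g.\ $\tanh$). The identity $\mu(H^+)+\mu(H^-)+\mu(P)=\mu(K)$ that you invoke closes the system only if you also know $\mu(K)=0$. This is easily supplied within your own framework: the span $\cS$ contains the constants (take $a=0$, so $\hat\sigma(a^\top x+\beta)=\hat\sigma(\beta)$, and $\hat\sigma(\beta)\neq 0$ for some $\beta$ since $\hat\sigma$ is non-constant and monotone), hence $\mu(K)=0$, and then the $2\times 2$ system with determinant $\hat\sigma(-\infty)-\hat\sigma(+\infty)\neq 0$ forces $\mu(H^\pm_{a,\beta})=0$ as you claim. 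With that one line added, the rest of your argument (Jordan decomposition, vanishing of all one-dimensional projections, hence of the Fourier transform, hence $\mu=0$) goes through; the Gaussian mollification you mention is not even needed once the Cram\'er--Wold device is invoked.
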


For the time horizon $[0,T]$, we denote $h:= (T-t)/n$ as the step size for the standard Euler scheme, and we denote $t_i := ih, i = 0, \cdots, n$. Similarly, for some $k\in \mathbb R^+$, we denote $u :=k\times(T-t)/n$ as the step size for the deep signature Euler scheme, and we denote $u_i := iu = ikh $, with $\Delta u_i= u_i - u_{i-1} = kh$, for $i = 0, \cdots, n/k:=\widetilde{n}$. Furthermore, we keep the convention that $\Delta W_{i+1}:= W_{i+1} - W_i$ and $\Delta W_{u_{i+1}}:=W_{u_{i+1}}-W_{u_i}$.
\subsection{Markovian case}
\label{convergence Markov}
According to deep signature  Euler scheme \eqref{eqn: Y_sig}, we have
\bea
Y^{\tilde n,\sig}_{u_i} := Y^{\tilde n,\sig}_{u_{i-1}} - f(u_{i-1},X^n_{u_{i-1}}, Y^{\tilde n,\sig}_{u_{i-1}}, Z^{\theta,\sig}_{u_{i-1}}) \Delta u_i + Z^{\theta,\sig}_{u_{i-1}}\Delta W_{u_{i}}. \notag
\eea
where $Z^{\theta,\sig}_{u_i}:=\mathcal R^{\theta}(\pi_m(\sig(X^{n})_0),\cdots,\pi_m(\sig(X^{n})_{i-1}) )$. At last, we denote  $Y^n_{u_i}$ and $Z^n_{u_i}$ as values for the standard Euler scheme approximation of $Y$ and $Z$ (equation \eqref{markov FBSDE}) at time $u_{i}$. The following estimate is a standard result for Markov BSDEs, see \cite{zhang2017backward}[Theorem 5.3.3].
\begin{lem}
\label{Euler approx}
Let Assumptions \ref{FBSDE Assumption} hold and assume h is small enough. Then
\bea
\max_{0 \leq i \leq n}\hE\Big[\sup_{t_{i}\leq t\leq t_{i + 1}}|Y_{t} - Y_{t_{i}}|^{2}\Big]  + \sum_{i = 0}^{n - 1}\hE\Big[\int^{t_{i + 1}}_{t_{i}}|Z_{t} - Z_{t_{i}}|^{2}dt\Big] \leq C[1 + |x|^{2}]h. \notag
\eea
\end{lem}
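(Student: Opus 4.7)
The plan is to reduce both increments to the standard $L^2$-regularity of the forward diffusion $X$ via the Markovian decoupling of the BSDE. Under Assumption \ref{FBSDE Assumption}, the nonlinear Feynman--Kac formula yields a classical solution $u \in C^{1,2}([0,T]\times\mathbb{R})$ of the semilinear PDE $\partial_t u + \tfrac12 \sigma^2 u_{xx} + b\,u_x + f(\cdot,\cdot,u,\sigma u_x) = 0$ with terminal condition $u(T,\cdot)=g$, together with the representation $Y_t = u(t,X_t)$ and $Z_t = \sigma(t,X_t)\,u_x(t,X_t)$. Standard regularity theory (Schauder bounds for quasilinear parabolic PDEs, or iterative differentiation of the BSDE in the spirit of Pardoux--Peng) produces bounds on $u_x$ and $u_{xx}$ under the smooth-coefficient hypothesis, from which $(t,x) \mapsto \sigma(t,x)\,u_x(t,x)$ is globally Lipschitz in $(t,x)$.

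The $Z$-sum is then immediate. The Lipschitz bound on $\sigma u_x$ gives
\[
|Z_t - Z_{t_i}|^2 \le C\big(|t-t_i|^2 + |X_t - X_{t_i}|^2\big),
\]
and combined with the classical SDE moment estimate $\hE|X_t - X_{t_i}|^2 \le C(1+|x|^2)(t-t_i)$, integrating over $[t_i,t_{i+1}]$ yields $C(1+|x|^2)h^2$. Summing over the $n = T/h$ intervals produces the desired bound $C(1+|x|^2)h$ for the second term.

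For the $Y$-piece I work directly from the BSDE on $[t_i, t_{i+1}]$, i.e.\ $Y_t - Y_{t_i} = -\int_{t_i}^t f(s,X_s,Y_s,Z_s)\,ds + \int_{t_i}^t Z_s\,dW_s$. Cauchy--Schwarz on the Lebesgue integral and Burkholder--Davis--Gundy on the stochastic integral give
\[
\hE\Big[\sup_{t_i\le t\le t_{i+1}} |Y_t - Y_{t_i}|^2\Big] \le C h\,\hE\!\int_{t_i}^{t_{i+1}} |f(s,X_s,Y_s,Z_s)|^2\,ds + C\,\hE\!\int_{t_i}^{t_{i+1}} |Z_s|^2\,ds.
\]
Combined with the linear growth of $f$ and the usual a priori $L^2$ bounds $\sup_{0\le t\le T} \hE[|X_t|^2 + |Y_t|^2 + |Z_t|^2] \le C(1+|x|^2)$ (which follow from Lipschitz BSDE theory under Assumption \ref{FBSDE Assumption}), each term on the right is $O((1+|x|^2)h)$ uniformly in $i$, yielding the sup estimate.

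The main obstacle is producing the regularity of the decoupling field $u$ in the first place; everything downstream is mechanical. Under the smooth-bounded-derivative hypothesis on the coefficients this is classical and underlies the cited result \cite[Theorem 5.3.3]{zhang2017backward}. A secondary subtlety is that since $b, \sigma, f, g$ are allowed to grow linearly (only their derivatives are bounded), the Lipschitz and moment constants inherit a $(1+|x|^2)$ factor, which is exactly what propagates to the final bound.
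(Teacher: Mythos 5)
The paper offers no proof of this lemma at all: it is quoted as a standard result with a citation to \cite{zhang2017backward} (Theorem 5.3.3), so your proposal is being measured against that cited argument rather than an in-paper one. Your treatment of the $Y$-increment is essentially fine: Cauchy--Schwarz plus BDG on the interval BSDE works, using $|Z_s|\le C(1+|X_s|)$ from the representation $Z_s=\sigma u_x(s,X_s)$ with $u_x$ bounded (one small imprecision: $\sup_t \hE[|Z_t|^2]\le C(1+|x|^2)$ is a consequence of this Markovian representation, not of generic Lipschitz BSDE theory, which only yields the integrated bound $\hE\int_0^T|Z_t|^2\,dt$).

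The genuine gap is the claim that $(t,x)\mapsto \sigma(t,x)u_x(t,x)$ is \emph{globally} Lipschitz, on which your entire $Z$-estimate rests. Under Assumption \ref{FBSDE Assumption}, $\sigma$ and $b$ are not bounded: only their derivatives and their values at $0$ are bounded, so they grow linearly in $x$. Then $\partial_x(\sigma u_x)=\sigma_x u_x+\sigma u_{xx}$, and although $u_x$ and $u_{xx}$ are bounded, $\sigma u_{xx}$ grows linearly unless $u_{xx}$ decays at infinity, which it need not: at $t=T$ one has $u_{xx}(T,\cdot)=g''$, so $\partial_x(\sigma u_x)(T,x)=\sigma_x g'+\sigma g''$, which has genuine linear growth whenever $g''$ does not vanish at infinity (e.g. $\sigma(x)=x$ for large $x$ and $g$ a smoothed function with oscillating, non-decaying second derivative --- allowed, since only $g(0)$ and the derivatives of $g$ are assumed bounded). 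The time direction is worse: $\partial_t(\sigma u_x)$ involves $u_{tx}=-\partial_x\bigl[\tfrac12\sigma^2u_{xx}+bu_x+f\bigr]$, which contains $\sigma^2u_{xxx}$ and $bu_{xx}$ and grows polynomially in $x$. Hence the pointwise inequality $|Z_t-Z_{t_i}|^2\le C(|t-t_i|^2+|X_t-X_{t_i}|^2)$ with a uniform constant is false. The natural repair --- a local Lipschitz bound with constant of order $1+|X_t|+|X_{t_i}|$, followed by Cauchy--Schwarz and fourth-moment estimates for $X$ --- only delivers $C(1+|x|^4)h$, which is weaker than the stated $C(1+|x|^2)h$. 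The proof behind the cited theorem avoids this trap: it uses the variational representation $Z_t=\nabla Y_t(\nabla X_t)^{-1}\sigma(t,X_t)$, splits $Z_t-Z_{t_i}$ so that the Lipschitz (linear-growth) increment of $\sigma$ is multiplied by the \emph{bounded} factor $u_x=\nabla Y(\nabla X)^{-1}$, and controls the remaining term by martingale estimates on $(\nabla X,\nabla Y,\nabla Z)$ --- never requiring Lipschitz regularity of $\sigma u_x$ itself. Your decomposition cannot reach the stated growth rate without importing these ideas or additionally assuming $\sigma$ bounded.
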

With the above lemma in hand, we are ready to prove the universal approximation property.

% \begin{lem}
% \label{Zsig approx}
% Let Assumptions \ref{FBSDE Assumption} hold and assume kh is small enough, for some constant $C>0$, and for any $\varepsilon>0$, there exists recurrent neural network $\theta$, such that
% \bea
% \sum_{i = 0}^{\tilde{n}-1}\hE\Big[\int^{u_{i + 1}}_{u_{i}}|Z_{t} - Z_{u_{i}}^{\theta, \sig} |^{2}dt\Big] \leq C[1 + |x|^{2}]kh+2T\varepsilon. \notag
% \eea
% \end{lem}
\begin{proof}[Proof of Lemma \ref{Zsig approx}]
We assume that constant $C$ changes generically from line to line. Applying the triangle inequality, for $t\in [u_{i},u_{i+1}]$, we have
$$|Z_t - Z_{u_i}^{\theta, \sig}|^2 \leq 2[|Z_t - Z_{u_i}|^2 + |Z_{u_i} - Z_{u_i}^{\theta, \sig}|^2],$$ 
which implies 
\beaa
\sum_{i = 0}^{\tilde{n}-1}\hE\Big[\int^{u_{i + 1}}_{u_{i}}|Z_{t} - Z_{u_{i}}^{\theta, \sig}|^{2}dt\Big] \leq 2\Big\{\sum_{i = 0}^{\tilde{n}-1}\hE\Big[\int^{u_{i + 1}}_{u_{i}}|Z_{t} - Z_{u_{i}}|^{2}dt+\int^{u_{i + 1}}_{u_{i}}|Z_{u_i} - Z_{u_{i}}^{\theta, \sig}|^{2}dt\Big]\Big\}. \notag
\eeaa
By Lemma \eqref{Euler approx}, we can obtain that
\bea
\sum_{i = 0}^{\tilde{n}-1}\hE\Big[\int^{u_{i + 1}}_{u_{i}}|Z_{t} - Z_{u_{i}}|^{2}dt\Big] \leq C(1 + |x|^2)kh\leq C(1 + |x|^2)\delta.\notag
\eea
 Furthermore, since $\tilde{n} \times kh = T$, we observe that 
\beaa 
\sum_{i = 0}^{\tilde{n}-1}\hE\Big[\int^{u_{i + 1}}_{u_{i}}|Z_{u_i} - Z^{\theta, \sig}_{u_{i}}|^{2}dt\Big]\leq T\max_{0 \leq i \leq \tilde{n}-1}\hE[|Z_{u_i} - Z_{u_i}^{\theta, \sig}|^2]. %\leq C\max_{0 \leq i \leq \tilde{n}-1}\hE[|Z_{u_i} - Z_{u_i}^{\theta, \sig}|^2].\notag
\eeaa

Now it suffices to show that for any $\varepsilon$, there exists network $\theta$ such that  $\max_{0\leq i\leq \tilde{n}-1} \hE[|Z_{u_i} - Z^{\theta, \sig}_{u_{i}}|^{2}] \leq \varepsilon /2T$. From the non-linear Feynman-Kac formula, we know 
\beaa 
Z_t=\pa_xu\sigma(t,X_t):=\mathcal{F}(t,X_t).
\eeaa 
For simplicity, we further assume that the solution $u\in \mathcal C^{\infty,\infty}_b$ and $\sigma\in \mathcal C^{\infty}_b$, which implies that $\mathcal F\in\mathcal C^{\infty,\infty}_b$. We thus have the following Taylor expansion, 
\bea 
dZ_t=d\mathcal{F}(t,X_t)=\pa_t\mathcal{F}(t,X_t)dt+\pa_x\mathcal F(t,X_t)\circ dX_t.
\eea 
Applying the change of variable formula iteratively, we get the following local approximation by using Taylor expansion at step $N$, 
\bea\label{taylor expansion state}
Z_t-Z_s=\mathcal F(t,X_t)-\mathcal F(s,X_s)\approx \sum_{k=1}^N\mathcal F^{\circ k}(\widehat X_s)\int_{I_k}\circ d\widehat X_{t_1}\otimes\cdots d\widehat X_{t_k}
\eea
where we denote $I_k:=\{s< t_1<t_2<\cdots<t_k< t\}$ as the subdivision of the time interval $[s,t]$, and $\{\widehat X_t\}_{t\in[0,T]}:=\{t,X_t\}_{t\in[0,T]}$ as the enhanced path of the time parameter and the path $\{X_t\}_{t\in[0,T]}$. The coefficient term $\mathcal F^{\circ k}$ in the above Taylor expansion is defined recursively, 
\beaa 
\mathcal F^{\circ 1}=\mathcal F=:\pa_xu\sigma, \quad \mathcal F^{\circ k+1}=D(\mathcal F^{\circ k}),
\eeaa 
where $D$ denotes the differential operator. Following the idea in \cite{liao2019learning}[Section 4], we consider the step-$N$ Taylor expansion of $Z$, denoted as $\{\widehat Z_{u_i}\}_{i=0}^{\tilde{n}}$. We have the following approximation of $Z_{u_i}$, 
\beaa 
Z_{u_i}=\mathcal F(u_i,X_{u_i})\approx\widehat Z_{u_i}&=&  \widehat Z_{u_{i-1}}+\sum_{k=1}^N\mathcal F^{\circ k}(\widehat X_{u_{i-1}})\widehat X^k_{u_{i-1},u_i}\\ 
&=& g^{\mathcal F}_N(\sig _{u_{i-1}},\widehat Z_{u_{i-1}}),\\ 
\text{or}&=& \widetilde g^{\mathcal F}_N(\LS_{u_{i-1}},\widehat Z_{u_{i-1}}),
\eeaa 
where $\LS_{u_{i-1}}$ is the log-signature layer of $\widehat X$, and $\sig_{u_{i-1}}$ is the signature layer of $\widehat X$. Plugging in $Z_{u_i}^{\theta, \sig} =\mathcal R^{\theta}((\sig_k)_{k=0}^{u_{i-1}})$ (or $Z_{u_i}^{\theta, \LS} =\mathcal R^{\theta}((\LS_k)_{k=0}^{u_{i-1}})$), for any $u_i$, we have
\beaa
\label{rnn z}
|Z_{u_i} - Z_{u_i}^{\theta, \sig}| &\le& |Z_{u_i} -\widehat Z_{u_i}|+|\widehat Z_{u_i}- Z_{u_i}^{\theta, \sig}| \\
&\le & |Z_{u_i}-\widehat Z_{u_i} |+ |g^{\mathcal F}_N(\sig_{u_{i-1}},\widehat Z_{u_{i-1}}) - Z_{u_i}^{\theta, \LS}|\\
\text{or}~\quad |Z_{u_i} - Z_{u_i}^{\theta, \LS}|&\le & |Z_{u_i}-\widehat Z_{u_i} |+ |\widetilde g^{\mathcal F}_N(\LS_{u_{i-1}},\widehat Z_{u_{i-1}}) - Z_{u_i}^{\theta, \sig}|.
\eeaa
 Applying Lemma \ref{NN approx}, for any $\varepsilon > 0$, as long as $kh=T/\widetilde n$ is small enough and the truncation order of the signature is large enough, we can always find a $\theta$ such that $\max_{i = 0}^{\tilde{n}-1}\hE[|Z_{u_i} - Z^{\theta, \sig}_{u_{i}}|^{2}] \leq \varepsilon/2T$, for any given $T$. In particular, $\varepsilon$ is independent of time discretization. If we replace the signature with log-signature, the proof follows similarly. (A similar proof for forward SDE can be found in \cite{liao2019learning}[Theorem 4.1]).
\qed
\end{proof}

% \begin{thm}
% Let \textbf{Assumption \ref{FBSDE Assumption}} be in force. Assume that $kh<\delta$ for any small $\delta>0$, for some constant $C>0$, and for any $\varepsilon>0$, {there exists recurrent neural network $\mathcal R^{\theta}$ depending on $T$}, such that
% \bea
% \max_{0 \leq i \leq \tilde{n}}\hE[\sup_{u_{i} \leq t \leq u_{i+1}}|Y_{t} - Y^{\tilde n,\sig}_{u_{i}}|^{2}] \leq C[1+|x|^2+\varepsilon]kh. \notag
% \eea
% \end{thm}

\begin{proof}[Proof of Theorem \ref{main thm}]
Applying triangle inequality, we have 
\[|Y_t - Y_{u_i}^{\tilde n, \sig}|^2 \leq C[|Y_t - Y_{u_i}|^2 + |Y_{u_i} - Y_{u_i}^{\tilde n, \sig}|^2].\] 
According to Lemma \ref{Euler approx}, one can obtain that
\bea
\label{DY0}
\max_{0\leq i\leq \tilde{n}}\hE[\sup_{u_i \leq t \leq u_{i+1}}|Y_t - Y_{u_i}|^2] \leq C(1 + |x|^2)kh\leq C(1 + |x|^2)\delta.
\eea
Next, it suffices to show that $ \max_{0\leq i\leq \tilde{n}}\hE[|Y_{u_i} - Y^{n, \sig}_{u_i}|^2] \leq C(1 + |x|^2)kh$. We denote $\D Y^{\tilde n, \sig}_{u_i} := Y_{u_i} - Y_{u_i}^{\tilde n, \sig}$, and $I_t^{u_i} := f(t, X_t, Y_t, Z_t) - f(u_{i}, X_{u_i}, Y^{\tilde n, \sig}_{u_i}, Z^{\theta, \sig}_{u_i})$. We thus have
\beaa
\label{DY}
\D Y_{u_{i+1}}^{\tilde n, \sig} = \D Y_{u_i}^{\tilde n, \sig} - \int_{u_i}^{u_{i+1}} I^{u_i}_tdt +  \int_{u_i}^{u_{i+1}} Z_t -Z_{u_i}^{\theta, \sig}dW_t.
\eeaa
%Now rearranging $\eqref{DY}$, we have
%\bea
%\label{DY1}
%\D Y_{u_i}^{n, Sig} + \int_{u_i}^{u_{i+1}} Z_t -Z_{u_i}^{n, Sig}dW_t = \D Y_{u_{i+1}}^{n, Sig} + \int_{u_i}^{u_{i+1}} I^{u_i}_tdt. 
%\eea
Taking squares on both sides and taking expectation, we have
\beaa
\label{DY1}
\hE[|\D Y_{u_{i+1}}^{\tilde n, \sig}|^2] \leq C\hE\Big[|\D Y_{u_{i}}^{\tilde n, \sig}|^2 + \int_{u_i}^{u_{i+1}}|I^{u_i}_t|^2dt + \int_{u_i}^{u_{i+1}} |Z_t -Z_{u_i}^{\theta, \sig}|^2 dt\Big].
\eeaa
According to Assumption \ref{FBSDE Assumption}, we further conclude that
\bea
\label{DY2}
|I_t^{u_i}|^2 &\leq& |f(t, X_t, Y_t, Z_t) - f(u_i, X_{u_i}, Y_{u_{i}}, Z_{u_i})|^2 \notag\\
&&+|f(u_i, X_{u_i}, Y_{u_{i}}, Z_{u_i}) -f(u_i, X_{u_i}, Y^{\tilde n, \sig}_{u_i}, Z^{\theta, \sig}_{u_i}) |^2 \notag\\
&\leq& L[(kh)^2 + |Y_t - Y_{u_{i}}|^2 + |X_t - X_{u_i}|^2 + |Z_t - Z_{u_i}|^2 \notag\\
&&+ |\D Y_{u_{i}}^{\tilde n, \sig}|^2 + |Z_{u_i} - Z_{u_i}^{\theta, \sig}|^2].
\eea
Plugging \eqref{DY2} into previous estimates, we obtain
\begin{eqnarray*}
\label{DY3}
\hE\Big[  \int_{u_i}^{u_{i+1}}|I^{u_i}_t|^2dt \Big] &\leq&  \hE\Big[L\int_{u_i}^{u_{i+1}}|Y_t - Y_{u_{i}}|^2 
+ |X_t - X_{u_i}|^2 + |Z_t - Z_{u_i}|^2 dt\notag\\
&&\quad\quad + L(kh)^3 + Lkh|\D Y_{u_{i}}^{\tilde n, \sig}|^2 + Lkh|Z_{u_i} - Z_{u_i}^{\theta, \sig}|^2\Big].  
\end{eqnarray*}
Applying Lemman \ref{Euler approx} and Lemma \ref{Zsig approx}, we further get the following estimates
\beaa
\label{DY4}
\hE[  \int_{u_i}^{u_{i+1}}|I^{u_i}_t|^2dt ]&\leq& C(1 + |x|^2)kh+L(kh)^2 +Lkh\varepsilon+ Lkh\hE[|\D Y_{u_{i}}^{\tilde n, \sig}|^2] \\
&\leq& C(1 + |x|^2+\varepsilon)kh + Lkh\hE[|\D Y_{u_{i}}^{\tilde n, \sig}|^2].
\eeaa
Combining the above estimates and Lemma \ref{Zsig approx}, for some constants $C_1$ and $C_2$, we have
\bea
\label{DY5}
\hE[|\D Y_{u_{i+1}}^{\tilde n, \sig}|^2] \leq  C_1\hE[|\D Y_{u_{i}}^{\tilde n, \sig}|^2] + C_2(1 + |x|^2+\varepsilon)kh.
\eea
Then by \eqref{DY5} and Gr\"onwall's inequality, we can conclude that
\bea
\label{DY6}
\max_{0\leq i\leq \tilde{n}}\hE[|Y_{u_i} - Y^{\tilde n, \sig}_{u_i}|^2] \leq C(1 + |x|^2+\varepsilon)kh.
\eea
At last, combining \eqref{DY0} and \eqref{DY6}, we have
\bea
\max_{0 \leq i \leq \tilde{n}}\hE[\sup_{u_{i} \leq t \leq u_{i+1}}|Y_{t} - Y^{\tilde n,\sig}_{u_{i}}|^{2}] \leq C[1+|x|^2+\varepsilon]\delta. \notag
\eea
\qed
\end{proof}

\subsection{Non-Markovian case}\label{convergence analysis NM FBSDE}
We first introduce the following notations (see e.g. \cite{dupire2019functional}).  Let $T>0$ be fixed. 
% \remove{For each $t \in[0, T]$, we denote by $\Omega_t$ the set of c\`adl\`ag  $\mathbb{R}$-valued functions on $[0, t]$, and denote $\Lambda_T=[0,T]\times \Omega_T$. We denote $\Omega=\bigcup_{t\in[0,T]}\Omega_t$  and  $\Lambda=\bigcup_{t \in[0, T]} \Lambda_{t}$.}
Denote $\Omega:= \{\omega \in C([0,T], \mathbb{R}^d)\}$ as the canonical space, and denote $\Lambda:= [0,T] \times \Omega$. For simplification, we still consider $d=1$ below. For each $\omega \in \Omega$, $X: \Lambda \to \mathbb{R}$ is the canonical process, namely $X_t(\omega):= \omega_t$. 
% \remove{
%  For each $\omega \in \Lambda_{T}$ the value of $\omega$ at time $t \in[0, T]$ is denoted by $\omega_t$. 
%  }
%  \remove{We denote $\omega_{\cdot\wedge T}:=(\omega_s)_{0 \leq s \leq T}$ as a c\`adl\`ag  process on $[0, T]$ and its value at time $s$ is $\omega_{s}$.
%  The path of $\omega$ up to time $t$ is denoted by $\gamma_{\cdot\wedge t}$.}
%  \annote{In this section, we consider a function $u$ of path, i.e., $u: \Lambda \mapsto \mathbb{R}$. This function $u=u(\hat \omega_{\cdot\wedge t}), \hat \omega_{\cdot\wedge t} \in \Lambda$ can be viewed as a family of real valued functions, i.e. $
% u\left(\hat \omega_{\cdot\wedge t}\right)=u\left(t, \hat \omega_{s, 0 \leq s \leq t}\right)=u\left(t,\hat  \omega_{s, 0 \leq s<t}, \hat \omega_t\right)$, for $ \hat \omega_{\cdot\wedge t} \in \Lambda_{t}$, and $t \in[0, T]$.}{do we need this?} 
For each $t< t'$ and $(t,\omega)$, $(t', \omega')$ $\in \Lambda$, we denote
\begin{equation}
\left\|\omega\right\|_{t}:=\sup _{s \in[0, t]}|\omega_s|,\quad d_{\infty}\left((t,\omega), (t',\omega')\right):=\sup _{s \in[0, T]}|\omega_{s\wedge t}-\omega'_{s\wedge t'}|+|t-t'| .
\end{equation}
Then  ($\Omega$, $\|\cdot\|$) %\add{($\Omega$, $\|\cdot\|_T$)}
is a Banach space, and $(\Lambda,d_{\infty})$ is a complete pseudometric space. For a function $u:\Lambda \rightarrow \hR $, the path derivatives of $u$ are defined as, if they exist, 
\begin{equation}\label{path derivative}
\begin{aligned}
D_tu(t,\omega_{\cdot\wedge t})&=\lim_{h\rightarrow 0, h>0}\frac{1}{h}[u(t+h,\omega_{\cdot\wedge t})-u(t,\omega_{\cdot\wedge t}) ],\\
D_{\omega}u(t,\omega_{\cdot\wedge t})&=\lim_{h\rightarrow 0, h>0}\frac{1}{h}[u(t,\omega_{\cdot\wedge t}+h\mathbf{1}_{[t,T]})-u(t,\omega_{\cdot\wedge t})]. 
\end{aligned}
\end{equation}
Similarly, we define $D_{\omega\omega}u(t,\omega_{\cdot\wedge t})=D_{\omega}(D_{\omega} u(t,\omega_{\cdot\wedge t}) )$. We first introduce the following functional It\^o's formula from \cite{dupire2019functional}.
\begin{thm}
Let $\left(\widetilde \Omega, \mathcal{F},\left(\mathcal{F}_{t}\right)_{t \in[0, T]}, \hP\right)$ be a probability space, if $X$ is a continuous semi-martingale and $u$ is in $\mathbb{C}^{1,2}(\Lambda)$, then for any $t \in[0, T]$ :
\begin{equation}\label{functional ito}
    u\left(X_{t}\right)-u\left(X_{0}\right)=\int_{0}^{t} D_{s} u\left(X_{s}\right) d s+\int_{0}^{t} D_{x} u\left(X_{s}\right) d X_s+\frac{1}{2} \int_{0}^{t} D_{xx} u\left(X_{s}\right) d\langle X\rangle_s, \quad a . s . .
\end{equation}
\end{thm}
For the purpose of our analysis, we record the Stratonovich form of the above It\^o's formula \eqref{functional ito} as below, 
\begin{equation}\label{stratonovich path}
u\left(X_{t}\right)-u\left(X_{0}\right)=\int_{0}^{t} D_{s} u\left(X_{s}\right) d s+\int_{0}^{t} D_{x} u\left(X_{s}\right) \circ d X_s.
\end{equation}
Now we are ready to prove the convergence of the non-Markovian FBSDE algorithm. We first introduce the following assumption. 
\begin{assumption}
\label{FBSDE Assumption path}
Let the following assumptions be in force.
\begin{itemize}
\item $b, \sigma, f, g$ are deterministic taking values in $\hR^{d_1},~ \hR^{d_1 \times d},~ \hR^{d_2},~ \hR^{d_2}$, respectively; and $b(\cdot, 0), \sigma(\cdot, 0), f(\cdot, 0, 0, 0)$ and $g(0)$ are bounded.
\item  $b, \sigma, f, g$ are $C^k$-smooth enough  with respect to all variables $(t,x_{\cdot\wedge t}, y, z)$ for any desired $k\in\mathbb N_+$  and all derivatives are bounded by constant $L$.
\end{itemize}
\end{assumption}
We denote  $\bar Y^n_{u_i}$ and $\bar Z^n_{u_i}$ as values for the standard Euler scheme approximation of $Y$ and $Z$ at time $u_{i}$ in equation \eqref{non-Markov FBSDEs}. According to Theorem \ref{functional ito}, the estimate from Lemma \ref{Euler approx} holds true for the standard Euler scheme of \eqref{non-Markov FBSDEs} under Assumption \ref{FBSDE Assumption path}.
\begin{lem}
\label{Euler approx path}Let Assumptions \ref{FBSDE Assumption path} hold and assume h is small enough. Then
\bea
\max_{0 \leq i \leq n}\hE\Big[\sup_{t_{i}\leq t\leq t_{i + 1}}|Y_{t} - \bar Y_{t_{i}}|^{2}\Big]  + \sum_{i = 0}^{n - 1}\hE\Big[\int^{t_{i + 1}}_{t_{i}}|Z_{t} - \bar Z_{t_{i}}|^{2}dt\Big] \leq C[1 + |x|^{2}]h. \notag
\eea
\end{lem}

Next, we prove the path-dependent version of Lemma \ref{Zsig approx} and Theorem \ref{main thm}. \remove{for} \eqref{eqn: Y_sig}

\begin{lem}
\label{Zsig approx path}
Let \textbf{Assumption \ref{FBSDE Assumption path}} hold and assume $kh<\delta$, for any $\delta>0$, for any given $T>0$, for some constant $C>0$ depending on $T$ and $L$, and for any $\varepsilon>0$, there exists recurrent neural network $\mathcal R^{\theta}$, such that
\bea
\sum_{i = 0}^{\tilde{n}-1}\hE\Big[\int^{u_{i + 1}}_{u_{i}}|Z_{t} -\bar Z_{u_{i}}^{\theta, \sig} |^{2}dt\Big] \leq C[1 + |x|^{2}]\delta+\varepsilon. \notag
\eea
\end{lem}
\begin{proof}
According to \eqref{eqn: Y_sig}, we have 
\begin{equation}
\begin{aligned}
      \bar  Y^{\tilde{n},\sig}_{u_i} :=  & \bar Y^{\tilde{n},\sig}_{u_{i-1}} - f(u_{i-1},X^n_{[0,u_{i-1}]}, \bar Y^{\tilde{n},\sig}_{u_{i-1}}, Z^{\theta,\sig}_{u_{i-1}}) \Delta u_i + \bar Z^{\theta,\sig}_{u_{i-1}}\Delta W_{u_{i}}.
\end{aligned}
\end{equation}
Similar to the proof of Lemma \ref{Zsig approx}, we have 
\beaa
\sum_{i = 0}^{\tilde{n}-1}\hE\Big[\int^{u_{i + 1}}_{u_{i}}|Z_{t} -\bar Z_{u_{i}}^{\theta, \sig}|^{2}dt\Big] \leq 2\Big\{\sum_{i = 0}^{\tilde{n}-1}\hE\Big[\int^{u_{i + 1}}_{u_{i}}|Z_{t} - \bar Z_{u_{i}}|^{2}dt+\int^{u_{i + 1}}_{u_{i}}|\bar Z_{u_i} -\bar  Z_{u_{i}}^{\theta, \sig}|^{2}dt\Big]\Big\}. \notag
\eeaa
Applying Lemma \eqref{Euler approx path}, we have
\bea
\sum_{i = 0}^{\tilde{n}-1}\hE\Big[\int^{u_{i + 1}}_{u_{i}}|Z_{t} - \bar Z_{u_{i}}|^{2}dt\Big] \leq C(1 + |x|^2)kh \leq C(1 + |x|^2)\delta.\notag
\eea
Similar to proof of Lemma \ref{Zsig approx}, we get 
\beaa
\sum_{i = 0}^{\tilde{n}-1}\hE\Big[\int^{u_{i + 1}}_{u_{i}}|\bar Z_{u_i} -\bar Z^{\theta, \sig}_{u_{i}}|^{2}dt\Big]\leq T\max_{0 \leq i \leq \tilde{n}-1}\hE[|\bar Z_{u_i} -\bar Z_{u_i}^{\theta, \sig}|^2].
\eeaa
Next, we show that for any $\varepsilon$, there exists network $\theta$ such that $\max_{0\leq i\leq \tilde{n}-1} \hE[|\bar Z_{u_i} -\bar Z^{\theta, \sig}_{u_{i}}|^{2}] \leq \varepsilon/2T$. Applying the nonlinear Feynman-Kac formula \cite{peng2016bsde} for the non-Markovian BSDE \eqref{non-Markov FBSDEs} and using the definition \eqref{path derivative}, we get (e.g. \cite{peng2016bsde}[Proposition 3.8]),
\beaa 
Z_t=D_xu(t,X_{\cdot\wedge t})\sigma(t,X_t):=\widehat{\mathcal{F}}(t,X_{\cdot\wedge t}), \quad \text{for}\quad X_{\cdot\wedge t}\in\Lambda. 
\eeaa 
According to Assumption \ref{FBSDE Assumption path}, we assume that the functional $\widehat{\mathcal F}\in\mathcal C^{\infty,\infty}_b$ is smooth. Applying \eqref{stratonovich path}, we thus have the following Taylor expansion, 
\bea 
dZ_t=d\widehat{\mathcal{F}}(t,X_{\cdot\wedge t})=D_t\widehat{\mathcal{F}}(t,X_{\cdot\wedge t})dt+D_x\widehat{\mathcal F}(t,X_{\cdot\wedge t})\circ dX_t.
\eea 
Applying the change of variable formula iteratively, we get the following local approximation by using Taylor expansion at step $N$, 
\beaa 
Z_t-Z_s=\widehat{\mathcal F}(t,X_{\cdot\wedge t})-\widehat{\mathcal F}(s,X_{\cdot\wedge s})\approx \sum_{k=1}^N\widehat{\mathcal F}^{\circ k}(\widehat X_{\cdot\wedge s})\int_{I_k}\circ d\widehat X_{t_1}\otimes\cdots d\widehat X_{t_k},
\eeaa 
which is similar to \eqref{taylor expansion state}. The key difference is that  the coefficient term $\widehat{\mathcal F}^{\circ k}$ in the above Taylor expansion is defined recursively as below, 
\beaa 
\widehat{\mathcal F}^{\circ 1}=\widehat{\mathcal F}=:D_xu\sigma, \quad \widehat{\mathcal F}^{\circ k+1}=D(\widehat{\mathcal F}^{\circ k}),
\eeaa  
where the derivative is defined in \eqref{path derivative} following the functional It\^o's formula. Since the signature term 
\[
\int_{I_k}\circ d\widehat X_{t_1}\otimes\cdots d\widehat X_{t_k},\quad \text{for}\quad k\in\hN_+
\]
is identical to the ones in Lemma \ref{Zsig approx}, the rest of the proof follows directly from Lemma \ref{Zsig approx}. The proof is thus completed.
\end{proof}
Applying the above Lemma \ref{Zsig approx path} and following the similar proof of Theorem \ref{main thm}, we have the following estimate.
\begin{thm}\label{main thm path}
Let \textbf{Assumption \ref{FBSDE Assumption path}} be in force. Assume that $kh<\delta$ for any small $\delta>0$, for any given $T>0$, for some constant $C>0$ depnding on $T$ and $L$ in \textbf{Assumption \ref{FBSDE Assumption path}}, and for any $\varepsilon>0$, there exists recurrent neural network $\mathcal R^{\theta}$, such that
\bea
\max_{0 \leq i \leq \tilde{n}}\hE[\sup_{u_{i} \leq t \leq u_{i+1}}|Y_{t} - Y^{\tilde  n,\sig}_{u_{i}}|^{2}] \leq C[1+|x|^2+\varepsilon]\delta. \notag
\eea
\end{thm}

\section{Conclusion}
This paper aims to develop efficient algorithms to solve non-Markovian FBSDEs or equivalent PPDEs. We combine the signature/log-signature transformation together with RNN model to solve the FBSDE numerically. Our algorithms show advantages in solving path-dependent problems, high-frequency data problems, and long time duration problems, which apply to a wide range of applications in financial markets. 

\bigbreak

\noindent\textbf{Acknowledgments.} 
We would like to thank Professor Jin Ma and Professor Jianfeng Zhang  for all the insightful comments.

%%%%%%%%%%%%%%%%%%%%%%%%%%%%%%%%%%%%%%%%%%%%%%%%%%%%%%%%%%%%
%
%\begin{footnotesize}
%\bibliographystyle{abbrv}
%\bibliography{FLZ}
%\bibliographystyle{plainnat}

  \end{document}